%%%%%%%%%%%%%%%%%%%%%%%%%%%%%%%%%%%%%%%%%%%%%%%%%%%%%%%%%%%%%%%%%%%%%
%%                                                                 %%
%% Please do not use \input{...} to include other tex files.       %%
%% Submit your LaTeX manuscript as one .tex document.              %%
%%                                                                 %%
%% All additional figures and files should be attached             %%
%% separately and not embedded in the \TeX\ document itself.       %%
%%                                                                 %%
%%%%%%%%%%%%%%%%%%%%%%%%%%%%%%%%%%%%%%%%%%%%%%%%%%%%%%%%%%%%%%%%%%%%%

%%\documentclass[referee,sn-basic]{sn-jnl}% referee option is meant for double line spacing

%%=======================================================%%
%% to print line numbers in the margin use lineno option %%
%%=======================================================%%

%%\documentclass[lineno,sn-basic]{sn-jnl}% Basic Springer Nature Reference Style/Chemistry Reference Style

%%======================================================%%
%% to compile with pdflatex/xelatex use pdflatex option %%
%%======================================================%%

%%\documentclass[pdflatex,sn-basic]{sn-jnl}% Basic Springer Nature Reference Style/Chemistry Reference Style

%%\documentclass[sn-basic]{sn-jnl}% Basic Springer Nature Reference Style/Chemistry Reference Style
\documentclass[pdflatex,sn-mathphys]{sn-jnl}% Math and Physical Sciences Reference Style
%%\documentclass[sn-aps]{sn-jnl}% American Physical Society (APS) Reference Style
%%\documentclass[sn-vancouver]{sn-jnl}% Vancouver Reference Style
%%\documentclass[sn-apa]{sn-jnl}% APA Reference Style
%%\documentclass[sn-chicago]{sn-jnl}% Chicago-based Humanities Reference Style
%%\documentclass[sn-standardnature]{sn-jnl}% Standard Nature Portfolio Reference Style
%%\documentclass[default]{sn-jnl}% Default
%%\documentclass[default,iicol]{sn-jnl}% Default with double column layout

%%%% Standard Packages
%%<additional latex packages if required can be included here>
%%%%

%%%%%=============================================================================%%%%
%%%%  Remarks: This template is provided to aid authors with the preparation
%%%%  of original research articles intended for submission to journals published 
%%%%  by Springer Nature. The guidance has been prepared in partnership with 
%%%%  production teams to conform to Springer Nature technical requirements. 
%%%%  Editorial and presentation requirements differ among journal portfolios and 
%%%%  research disciplines. You may find sections in this template are irrelevant 
%%%%  to your work and are empowered to omit any such section if allowed by the 
%%%%  journal you intend to submit to. The submission guidelines and policies 
%%%%  of the journal take precedence. A detailed User Manual is available in the 
%%%%  template package for technical guidance.
%%%%%=============================================================================%%%%

\usepackage{amsmath}
\usepackage{amsthm}
\usepackage{xcolor}
\usepackage[font=footnotesize]{caption} 
\usepackage{subcaption}
\usepackage[normalem]{ulem}
\usepackage{enumitem}
\usepackage{comment}
\usepackage{microtype}
\usepackage{graphbox}
\usepackage{natbib}

% Definitions of handy macros can go here

% J.M. commands

\newcommand{\Ncal}{\mathcal{N}}
\newcommand{\Dcal}{\mathcal{D}}
%

%%%%%%% revised %%%%%%%%%%
\usepackage{color}

\usepackage{setspace}
%\doublespacing
\usepackage{ulem}

%%%%%%% revised %%%%%%%%%%

\newlength{\tempdima}
\newcommand{\rowname}[1]% #1 = text
{\rotatebox{90}{\makebox[\tempdima][c]{\textbf{#1}}}}

\newcommand{\mycaption}[1]% #1 = caption
{\refstepcounter{subfigure}\textbf{(\thesubfigure) }{\ignorespaces #1}}

% Keywords command
\providecommand{\keywords}[1]
{
  \small	
  \textbf{\textit{Keywords---}} #1
}

\usepackage{graphicx}
\usepackage{float}

\newtheorem{lemma}{Lemma}

\jyear{2021}%

%% as per the requirement new theorem styles can be included as shown below
\theoremstyle{thmstyleone}%
%  meant for continuous numbers
%%\newtheorem{theorem}{Theorem}[section]% meant for sectionwise numbers
%% optional argument [theorem] produces theorem numbering sequence instead of independent numbers for Proposition
% 
%%\newtheorem{proposition}{Proposition}% to get separate numbers for theorem and proposition etc.

\theoremstyle{thmstyletwo}%

\theoremstyle{thmstylethree}%

\raggedbottom
%%\unnumbered% uncomment this for unnumbered level heads

\begin{document}

\title{A framework for benchmarking uncertainty in deep regression}

\author*[1]{\fnm{Franko} \sur{Schm\"ahling}}\email{franko.schmaehling@ptb.de}

\author[1]{\fnm{J\"org} \sur{Martin}}\email{joerg.martin@ptb.de}
\equalcont{These authors contributed equally to this work.}

\author[1]{\fnm{Clemens} \sur{Elster}}\email{clemens.elster@ptb.de}
\equalcont{These authors contributed equally to this work.}

\affil*[1]{\orgname{Physikal.-Techn. Bundesanstalt Braunschweig \& Berlin}, \orgaddress{\country{Germany}}}

%%==================================%%
%% sample for unstructured abstract %%
%%==================================%%

\abstract{We propose a  framework for the assessment of uncertainty quantification in deep regression. The framework is based on regression problems where the regression function is a linear combination of nonlinear functions.
Basically, any level of complexity can be realized through the choice of the nonlinear functions and the dimensionality of their domain. 
Results of an uncertainty quantification for deep regression are compared against those obtained by a statistical reference method.
The reference method utilizes knowledge of the underlying nonlinear functions and is based on a Bayesian linear regression using a reference prior. Reliability of uncertainty quantification is assessed in terms of coverage probabilities, and accuracy through the size of calculated uncertainties.
We illustrate the proposed framework by applying it to current approaches for uncertainty quantification in deep regression. 
The flexibility, together with the availability of a reference solution, makes the  framework suitable for defining benchmark sets for uncertainty quantification.}

\keywords{Deep Learning, Bayesian Networks, Deep Regression, Reference Prior}

\maketitle

\section{Introduction}

Methods from deep learning have made tremendous progress over recent years and are meanwhile routinely applied in many different areas. Examples comprise the diagnosis of cancer \cite{KOOI2017303}, autonomous driving \cite{huang2020autonomous} or language processing \cite{Otter_NLP_DL,Deng_DL_NLP}. Physical and engineering sciences also benefit increasingly from deep learning and current applications range from optical form measurements \cite{HoffmannL2020} over image quality assessment in mammography \cite{kretz2020mammography,Kretz2020} or medical imaging in general \cite{LUNDERVOLD2019102} to applications in weather forecasts and climate modeling \cite{Schultz_DL_weather,Watson-Parris_DL_weatherclimate}.

Especially for safety relevant applications, such as medical diagnosis, it is crucial to be able to quantify the uncertainty associated with  a prediction. Uncertainty quantification can help to understand the behavior of a trained neural net and, in particular, foster confidence in its predictions. This is especially true for deep regression, where a single point estimate of a sought function without any information regarding its accuracy can be largely meaningless. 
	
Bayesian networks \cite{Bishop2006, Gal2016,Depeweg2018} lend themselves to an uncertainty quantification in a natural way. All network parameters are modeled as random variables, and their posterior distribution can be used to calculate the epistemic part of the uncertainty. However, the shear number of network parameters prohibits the application of established computation techniques such as Markov chain Monte Carlo \cite{RobertCasella2005}, which have been successfully applied in numerous Bayesian inferences on low-dimensional models.
	
Approximate Bayesian techniques, and in particular Variational Bayesian Inference  \cite{Bishop2006}, have therefore been suggested for deep learning. At the same time, these methods are closely related to classical network regularization techniques \cite{Hinton2012,Srivastava2014,pmlr-v28-wang13a}.

Approaches based on ensemble learning \cite{Lakshminarayanan2017} comprise another branch of current uncertainty quantification methods. These techniques scale well and are easy to implement. Furthermore, they have been shown to perform very successfully compared to other methods for uncertainty quantification \cite{Caldeira2020,Gustafsson2020,Ovadia2019,Scalia2020}.
	
In accordance with the growing efforts on the development of methods for uncertainty quantification in deep learning, the amount of work on comparing different uncertainty quantification approaches increases as well \cite{Yao2019, Levi2019, Gustafsson2020, Kendall2017, Ovadia2019, Caldeira2020,Scalia2020}. Different ways of assessing the performance of uncertainty quantification methods have been proposed, ranging from explorative comparisons of the performance on test data for simple regression problems with known ground truth to various metrics \cite{Gustafsson2020,Levi2019}.\\
We propose a novel framework for the assessment of uncertainty quantification in deep regression. The framework is based on regression problems of the kind

\begin{equation}\label{eq0.1}
y_i \sim \Ncal \left( G(x_i)^T \gamma, \sigma^2 \right)\,,
\end{equation}
that are linear in the parameter $\gamma=(\gamma_1,\gamma_2, \ldots)^T$, and where $G(x)=(G_1(x), G_2(x),\ldots)^T$ is a vector built of nonlinear functions $G_1, \,G_2,\,\ldots~$. Deep regression intends to infer the regression function $G(x)^T \gamma$, and it does not make use of the specific structure of \eqref{eq0.1}. The level of complexity is controlled through the choice of the nonlinearity in the function $G(x)$ as well as by the dimensionality of its input $x$. Results of an uncertainty quantification for deep regression of problems of the kind in \eqref{eq0.1} can be compared to those obtained by a statistical reference method. The reference method utilizes knowledge of $G$ and the linearity with respect to the unknown parameters $\gamma$. It consists of a Bayesian inference using the improper prior $\pi(\gamma)\propto 1$, which coincides with the reference prior for the statistical model \eqref{eq0.1} \cite{berger1992development}. This prior is probability matching \cite{Kass&Wasserman1996} so that the proposed reference uncertainty quantification can also be recommended from the perspective of classical statistics. The statistical reference method provides an anchor solution against which different methods of uncertainty quantification for deep regression can be compared.

Performance of uncertainty quantification is here assessed in terms of reliability and accuracy. The former is quantified through coverage probabilities of interval estimates for the regression function at individual inputs $x$, and the latter by the size of the associated uncertainty. We concentrate on the assessment of the epistemic uncertainty which is relevant when inference of the regression function is the goal of the task. The performance metrics have been chosen on grounds of their simplicity and typical use in statistics. Nevertheless, other performance metrics, cf. for instance \cite{Gustafsson2020,Levi2019}, could be used as well in conjunction with the proposed framework.
	
The assessment of methods for uncertainty quantification in deep regression is illustrated by applying it to several current approaches and for several choices of the function $G(x)$. 

The paper is structured as follows. In section \ref{sec:theory} we present the proposed structure of test problems along with their treatment by a Bayesian reference inference.  In section \ref{sec:experiments} we illustrate the usage of the framework from section \ref{sec:theory} for various choices of the underlying nonlinearity and study the performance of several deep learning methods for uncertainty quantification. Finally, some conclusions are given in section \ref{sec:ConclusionOutlook}.

\section{Creating reference models for uncertainty quantification}
\label{sec:theory}
\subsection{A primer on (deep) regression and uncertainty } 
\label{subsec:primer_deep_regression}
A typical, normal regression for training data $\Dcal=\{(x_1,y_y),\ldots,(x_N,y_N)\}$ assumes a model of the sort
\begin{align}
    \label{eq:generic_normal_regression}
    y_i\vert x_i  \sim \Ncal(f_\theta(x_i), \sigma^2),
\end{align}
where $\Ncal$ denotes a normal distribution, $f_\theta$ is some function, parametrized by $\theta$, and $\sigma^2$ denotes the variance of underlying noise. For simplicity we will, in this work, only consider \textit{one-dimensional outputs} $y_i$,
and we will also assume knowledge about $\sigma^2$. The input $x_i$ and the parameter $\theta$ will be of arbitrary dimensions.
Assuming independent observations, \eqref{eq:generic_normal_regression} gives rise to the statistical model 
$p(\mathcal{D}\vert\theta)=\prod_{i=1}^N \Ncal(y_i\vert f_\theta(x_i),\sigma^2)$
and inferring $\theta$ could be done, for instance, by maximizing $p(\mathcal{D}\vert\theta)$ with respect to $\theta$ (maximum likelihood estimate - MLE) or by a  Bayesian approach, that is by computing the posterior 
\begin{align}
    \label{eq:BayesTheorem}
   \pi(\theta\vert \mathcal{D}) \propto \pi(\theta) \cdot p(\mathcal{D}\vert \theta)\,,
\end{align}
where $\pi(\theta)$ denotes the prior for $\theta$. The uncertainty associated with $\theta$ might, in the Bayesian case, be taken as the variation arising from the distribution \eqref{eq:BayesTheorem} while, in the frequentist case, often the variation of the MLE under the data distribution is used.

In the case of deep regression, $f_\theta$ is given by a neural network and the parameter $\theta$ contains all parameters 
of the network, such as weights and biases, and is therefore typically of a pretty high dimension ($10^6$ or more). In such cases there is usually no hope of sampling from $\pi(\theta\vert \Dcal)$ or computing its normalization constant. Instead, one has to rely on approximations. In deep learning, many methods \cite{Graves2011,Kingma2015,Gal2017,Blundell2015,martin2021errorsinvariables} rely on the statistical tool of variational inference, that is to approximate $\pi(\theta\vert \Dcal)$ via a feasible distribution $q_\phi(\theta)$, parametrized by a hyperparameter $\phi$, by minimizing the Kullback-Leibler divergence with respect to $\phi$. Depending on the class of variational distributions $q_\phi$ that is chosen, a variety of approximations arise without a clear gold standard. Besides approaches based on variational inference, there are other approaches to quantify the uncertainty about $\theta$  \cite{Pearce2018,Lakshminarayanan2017,Barber98ensemblelearning,Bishop94mixturedensity}. The approach of deep ensembles \cite{Lakshminarayanan2017}, introduced as a sort of frequentist approach, can also be viewed as a Bayesian approach \cite{he2020bayesian, Hoffmann2021}.

\subsection{A generic model for benchmarking uncertainty} \label{subsec:genericmodel4benchmark}
While for neural networks it is non-trivial to obtain an uncertainty for $\theta$ in  \eqref{eq:generic_normal_regression}, for problems that are linear in a lower dimensional parameter this is standard. In this section we will exploit this fact to propose a generic scheme to create benchmarking test problems of arbitrary complexity. 
Given a generally nonlinear function $G(x)$ with $p$-dimensional output and a $p$-dimensional parameter $\gamma$, we can define the following model
\begin{align}
    \label{eq:genericmodelbenchmark}
    y_i \sim \Ncal(G(x_i)^T\gamma, \sigma^2)\,,
\end{align}
where we assume $\sigma^2$ to be known. We will use \eqref{eq:genericmodelbenchmark} in three different ways:
\begin{enumerate}
    \item To generate \textit{training data} $\mathcal{D}$: fix input data $\mathbf{x}=(x_1,\ldots,x_N)$ and generate (independent) $\mathbf{y} = (y_1,\ldots,y_N)$ according to \eqref{eq:genericmodelbenchmark}. 
    Store both, $\mathbf{x}$ and $\mathbf{y}$, in $\mathcal{D}=\{ (x_i,y_i)\, \vert \, i=1,\ldots,N\}$.
    \item To generate \textit{test data} with a known ground truth: we fix test inputs  $\textbf{x}^\ast=(x_1^\ast, \ldots, x_{N^\ast}^\ast)$. The values $G(x_i^\ast)\gamma$ will be taken as the (unnoisy) ground truth to judge the quality of predictions and uncertainties of the studied models.
    \item To create an \textit{anchor model}: the test input $\textbf{x}^\ast$ from 2. is evaluated under a Bayesian inference, based on $\mathcal{D}$ from 1., and the ``true model'' \eqref{eq:genericmodelbenchmark}. This will be described in the following and is summarized by the distribution \eqref{eq:BayesSolutionPredictionSolution} below. The anchor model serves as a reference point to judge the performance of uncertainty methods for other (neural network based) models of the more general shape \eqref{eq:generic_normal_regression}.
\end{enumerate}
The map $G(x)$ in \eqref{eq:genericmodelbenchmark} allows to tune the nonlinearity and complexity underlying the training data $\mathcal{D}$. The linearity of \eqref{eq:genericmodelbenchmark} with respect to $\gamma$, on the other hand, allows for an explicit Bayesian inference in the anchor model (assuming knowledge about $G(x)$). We here propose to use the (improper) reference prior $\pi(\gamma) \propto 1$ for the statistical model \eqref{eq:genericmodelbenchmark}, see e.g. \cite{berger1992development}. Taking this prior has, in our case, the convenient effect of exact probability matching, 
and hence the coverage probability under repeated sampling of $\mathbf{y}$ of credible intervals derived from the Bayesian posterior equals their credibility level. This ensures that credible and confidence intervals coincide. The according posterior distribution can be specified explicitly \cite{gelman2013bayesian} as
\begin{align}
    \label{eq:BayesSolutionPosterior}
    \pi(\gamma\vert \mathcal{D}) \propto \prod_{i=1}^{n} e^{-\frac{1}{2\sigma^{2}}(y_{i}-G(x_{i})^T\gamma)^{2}}\propto 
    e^{-\frac{1}{2\sigma^2}(\gamma-\hat{\gamma})^{T}V^{-1}(\gamma-\hat{\gamma})}
    \propto \Ncal( \gamma\vert \hat{\gamma}, \sigma^2 V)    \,,
\end{align}
where $\hat{\gamma} =  V G(\mathbf{x})\mathbf{y},\, V=(G(\mathbf{x}) G(\mathbf{x})^T)^{-1}$ and where the evaluation $G(\mathbf{x})=$ $(G(x_1)$ $,\ldots,G(x_N))$ $\in \mathbb{R}^{p\times N}$ should be read element-wise.  For the test points $\textbf{x}^\ast$, 
the distribution, conditional on $\mathcal{D}$, for
the prediction $G(\textbf{x}^\ast)^T\gamma$ is then given by
\begin{align}
    \label{eq:BayesSolutionPredictionSolution}
    \pi(G(\textbf{x}^\ast)^T \gamma \vert \mathcal{D}) =\mathcal{N}(G(\textbf{x}^\ast)^T\hat{\gamma}, \sigma^2 G(\textbf{x}^\ast)^T V G(\textbf{x}^\ast)) \,.
\end{align}

We will refer to \eqref{eq:BayesSolutionPredictionSolution} as the \textit{anchor model} in the remainder of this work and use it for benchmarking the performance of other methods on $\mathcal{D}$. For the evaluation we will treat, for the anchor model and all other methods, $\sigma^2$ as known and use the same value that was taken to generate $\mathcal{D}$. 
The distribution \eqref{eq:BayesSolutionPredictionSolution} models the epistemic uncertainty for the test data points $\mathbf{x}^\ast$, which is, as we explained in the introduction, the only uncertainty we study in this work. To consider the aleatoric uncertainty, the posterior predictive distribution $\pi(\mathbf{y}^\ast\vert \mathcal{D}, \mathbf{x}^\ast)=\Ncal(\mathbf{y}^\ast\vert G(\mathbf{x}^\ast)^T\hat{\gamma}, \sigma^2 G(\mathbf{x}^\ast)^T V G(\mathbf{x}^\ast)+ \sigma^2 \mathbf{I})$ could be used instead.

Due to the employed probability matching prior the anchor model
expresses both, the Bayesian and the frequentist view,
and can thus be taken as reference point for approaches from both
worlds. From a frequentist point of view, the anchor model
\eqref{eq:BayesSolutionPredictionSolution} is unbiased and achieves
the Cramer-Rao bound. This makes the model ``optimal'' in the sense of
the following lemma.
\begin{lemma}
    \label{lem:optimality}
   Consider an $x^\ast$ from the domain of $G$ and an estimator $T(\mathbf{y};x^\ast)$ for $G(x^\ast)^T\gamma$, based on $\mathbf{y} \sim \Ncal(G(\mathbf{x})\gamma, \sigma^2 \mathbf{I})$. Consider further some non-negative function $u(\mathbf{y}; x^\ast)$. 
    Then, at least one of the following two statements is true
   \begin{enumerate}
       \item The estimator $T(\cdot\,;x^\ast)$ is biased.
       \item Whenever, for some $\mathbf{y}'$, $u(\mathbf{y}';x^\ast)$ is smaller than the  
   uncertainty of the anchor model \eqref{eq:BayesSolutionPredictionSolution}, that is \begin{align}
        \label{eq:LemmaAssumption}
        u(\mathbf{y}'; x^\ast)^2< \sigma^2 G(x^\ast)^TVG(x^\ast)\,, 
    \end{align}
    then $u(\mathbf{y}'; x^\ast)$ is also smaller than the root mean squared error of $T(\cdot\,;x^\ast)$ under repeated sampling, i.e. $u(\mathbf{y}'; x^\ast)^2<\mathbb{E}_{\mathbf{y}\sim \Ncal(G(\mathbf{x})^T\gamma, \sigma^2\mathbf{I})}\left[(T(\mathbf{y}; x^\ast)-G(x^\ast)^T\gamma)^2\right]$.
   \end{enumerate}
\end{lemma}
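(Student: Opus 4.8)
The plan is to recognize Lemma~\ref{lem:optimality} as a repackaging of the Cramér--Rao bound for the linear Gaussian model, the main work being to verify that the squared anchor uncertainty in \eqref{eq:LemmaAssumption} coincides \emph{exactly} with that bound. Write $A = G(\mathbf{x})^T \in \mathbb{R}^{N\times p}$, so the sampling model is $\mathbf{y}\sim\Ncal(A\gamma,\sigma^2\mathbf{I})$, and let $\psi(\gamma)=G(x^\ast)^T\gamma$ be the scalar functional to be estimated. First I would compute the Fisher information of $\gamma$: from $\ell(\gamma)=-\tfrac{1}{2\sigma^2}\|\mathbf{y}-A\gamma\|^2+\mathrm{const}$ the score is $\tfrac{1}{\sigma^2}A^T(\mathbf{y}-A\gamma)$, whence $I(\gamma)=\sigma^{-2}A^TA=\sigma^{-2}V^{-1}$ and $I(\gamma)^{-1}=\sigma^2 V$, using $V=(G(\mathbf{x})G(\mathbf{x})^T)^{-1}$.

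The second step is the bound itself. For any unbiased estimator $T$ of $\psi(\gamma)$ the multivariate Cramér--Rao inequality gives $\mathrm{Var}(T)\ge (\nabla\psi)^T I(\gamma)^{-1}(\nabla\psi)$. Since $\nabla\psi=G(x^\ast)$, the right-hand side equals $\sigma^2 G(x^\ast)^T V G(x^\ast)$, which is precisely the squared anchor uncertainty. If I prefer a self-contained argument to citing the bound, I would derive it from the covariance inequality: differentiating the unbiasedness identity $\mathbb{E}_\gamma[T]=\psi(\gamma)$ under the integral sign gives $\mathrm{Cov}(T, S)=\nabla\psi$ for the score $S$, and applying Cauchy--Schwarz to $T$ and $a^T S$ with the optimal choice $a=I(\gamma)^{-1}\nabla\psi$ reproduces the inequality; the Gaussian family satisfies the regularity conditions needed to differentiate under the integral.

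The final step is purely logical and converts the bound into the stated dichotomy. Suppose statement~1 fails, i.e. $T(\cdot\,;x^\ast)$ is unbiased. Then its mean squared error equals its variance, so by the previous step $\mathbb{E}_{\mathbf{y}}[(T(\mathbf{y};x^\ast)-G(x^\ast)^T\gamma)^2]\ge \sigma^2 G(x^\ast)^T V G(x^\ast)$. Now take any $\mathbf{y}'$ satisfying the hypothesis \eqref{eq:LemmaAssumption}, namely $u(\mathbf{y}';x^\ast)^2<\sigma^2 G(x^\ast)^T V G(x^\ast)$; chaining the two inequalities yields $u(\mathbf{y}';x^\ast)^2<\mathbb{E}_{\mathbf{y}}[(T(\mathbf{y};x^\ast)-G(x^\ast)^T\gamma)^2]$, which is exactly statement~2. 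Hence whenever statement~1 is false, statement~2 holds, so at least one of the two is always true.

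The computations are routine; the only point requiring care is conceptual rather than computational. The Cramér--Rao bound controls the variance, and therefore applies cleanly only when $T$ is unbiased — which is exactly why the lemma is phrased as an either/or, with the biased case siphoned off into alternative~1 and the unbiased case handled by the bound after collapsing MSE to variance. As a sanity check I would also note that the bound is attained by the anchor model's own least-squares estimator $\hat{\gamma}=VG(\mathbf{x})\mathbf{y}$, confirming that the anchor uncertainty is the sharp bound and not merely a lower bound.
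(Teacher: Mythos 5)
Your proof is correct and follows essentially the same route as the paper: both hinge on the Cram\'er--Rao bound for the unbiased case (showing the anchor uncertainty $\sigma^2 G(x^\ast)^T V G(x^\ast)$ is exactly the bound for the functional $G(x^\ast)^T\gamma$) and then pass from variance to mean squared error, with the biased case absorbed into alternative~1. The only differences are cosmetic: you chain the inequalities directly and spell out the Fisher information computation, whereas the paper argues the middle step by contradiction and simply cites the Cram\'er--Rao theorem.
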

\begin{proof}
    Assume that 1 is not true, so that $T(\cdot\,;x^\ast)$ is unbiased. Note first, that, in this case, for any $\mathbf{y}'$ that satisfies
    \eqref{eq:LemmaAssumption} we do have
    \begin{align}\label{eq:lemma_proof_case}
         u(\mathbf{y}'; x^\ast)^2 < \mathrm{Var}_{\mathbf{y}\sim \Ncal(G(\mathbf{x})^T\gamma, \sigma^2\mathbf{I})}(T(\mathbf{y};x^\ast)) \,.
    \end{align}
     Indeed, otherwise we could apply the Cramer-Rao inequality (in the form of \cite[Theorem 3.3]{Shao2003}) to the unbiased estimator $T(\cdot\,;x^\ast)$ and would get a contradiction to \eqref{eq:LemmaAssumption}:
     \begin{align*}
    u(\mathbf{y}';x^\ast)^2\geq \mathrm{Var}_{\mathbf{y}\sim \Ncal(G(\mathbf{x})^T\gamma, \sigma^2\mathbf{I})}(T(\mathbf{y};x^\ast))\geq \sigma^2 G(x^\ast)^TVG(x^\ast)\,.
    \end{align*}
     From \eqref{eq:lemma_proof_case} we arrive at 2, since
     the mean squared error is bounded from below by the variance under repeated sampling
    \begin{align*}
    u(\mathbf{y}';x^\ast)^2 &<\mathrm{Var}_{\mathbf{y}\sim 
\Ncal(G(\mathbf{x})^T\gamma, \sigma^2 \mathbf{I})}(T(\mathbf{y};x^\ast)) \\ &\leq \mathbb{E}_{\mathbf{y}\sim 
\Ncal(G(\mathbf{x})^T\gamma, \sigma^2\mathbf{I})}\left[(T(\mathbf{y};x^\ast)-G(x^\ast)^T\gamma)^2\right]\,.
    \end{align*}
\end{proof}
For our purposes, the statement of lemma \ref{lem:optimality} can be summarized as follows: suppose for some test data point $x^\ast$ and training data $\mathbf{y}$ we have a prediction $T(\mathbf{y};x^\ast)$ of a neural network, for instance the average of the outputs under an approximation to the posterior \eqref{eq:BayesTheorem}, and an uncertainty $u(\mathbf{y}; x^\ast)$, for instance the corresponding standard deviation. 
If the uncertainty is smaller than the one from the anchor model, lemma \ref{lem:optimality} tells us that the prediction is biased or/and the uncertainty is ``too small'', at least compared to the root mean squared error.

\section{Experiments}
\label{sec:experiments}

In this section we study various choices for the benchmarking framework from section \ref{sec:theory}. Namely, we analyze the performance of a couple of state of the art methods for uncertainty quantification in deep learning under test scenarios created with \eqref{eq:genericmodelbenchmark} and compare them with each other and with the anchor model from \eqref{eq:BayesSolutionPredictionSolution}. Methods under test are variational Bayes methods such as Bernoulli-Dropout \cite{Gal2016} (abbreviated "BD" in the following), Concrete Dropout (CD) \cite{Gal2017}, Variational Dropout (VD) \cite{Kingma2015} as well as methods based on Deep Ensembles \cite{Lakshminarayanan2017}. For the latter we used three modes of training: ``standard'' (Ens), with adversarial examples included in training (EnsAdvA) and trained via bootstrapping the training data (EnsBS). In both, Concrete and Variational Dropout, the dropout rate is inherently optimized during the training process. As described in section \ref{sec:theory} the anchor model \eqref{eq:BayesSolutionPredictionSolution} will be included as a reference model. As the anchor model is based on Bayesian linear regression we will use the abbreviation BLR.

Based on the framework from section \ref{sec:theory} we can test these methods on various numerical experiments, arising from different choices of the nonlinear functions $G(x)$ in \eqref{eq:genericmodelbenchmark}. In this work we will consider the following three experiments.$~$\\

\begin{enumerate}[label=\textbf{E.\arabic*}]
    \item \label{E1} As a first choice we will consider a linear combination of $p=4$ sinusoidal functions, with different phases and varying frequencies: 
      \begin{equation}
        \label{eq:sinusoidal_example}
        G(x_{i}) = (\sin(2\pi f_{1} \cdot x_{i} + \rho_{1}), \ldots , \sin(2\pi f_{4} \cdot x_{i} + \rho_{4}))^T\,,
    \end{equation}
    where the $x_i$ are univariate and where the $\gamma \in \mathbb{R}^4$ in \eqref{eq:sinusoidal_example} are randomly chosen from $[0,1]$ (uniform with support in $[0,1]$). All four frequencies $f_{1},\ldots,f_{4}$ are equally spaced between $[0.9, 1.1]\cdot f_{\mathrm{main}}$ and phases $\rho$ are equidistantly chosen between $[0,2\pi]$, where we will consider various choices for $f_{\mathrm{main}}=1,\,2,\,3,\,\ldots$. The used $\sigma$ for this experiment is $0.75$.
    
    \item \label{E2} As a second example, we will use the Styblinsky-Tang-function to scale the dimensionality in an arbitrary manner, i.e. for $d=1,2\ldots,$ we define
    \begin{equation}\label{eq:StyblinkskiTang}
        G(x_{i}) = (x_{i,1},\,x_{i,1}^2,\,x_{i,1}^4,\,x_{i,2},\,x_{i,2}^2,\,x_{i,2}^4,\,\ldots,\,x_{i,d},\, x_{i,d}^2,\,x_{i,d}^4)^T,
    \end{equation}
    with $x_{i,j}$ denoting the $j$-th component of $x_i\in\mathbb{R}^{d}$ and with 
    \begin{equation}\nonumber
    \gamma=(2.5,\,-8,\,0.5,\,2.5,\,-8,\,0.5,\,\ldots,\,2.5,\,-8,\,0.5)^{T}\in \mathbb{R}^{3d}.    
    \end{equation}
     The used $\sigma$ for this experiment is $3$.
    \item \label{E3} As a third example, we will choose a simple polynomial mapping, so that $G^{T}\gamma$ represents a linear combination of linear, quadratic and mixed terms 
    \begin{equation}
        \label{eq:simplepoly2D}
        G(x_{i}) = (1,\,x_{i,1},\,x_{i,2},\,x_{i,1}\cdot x_{i,2},\,x_{i,1}^{2},\,x_{i,2}^{2})^T
    \end{equation}
    with randomly chosen coefficients $\gamma \in \mathbb{R}^7$ (uniform with support in $[0,1]$) in \eqref{eq:simplepoly2D} and $x_{i}\in\mathbb{R}^{2}$. 
    The used $\sigma$ for this experiment is $0.5$.
    
\end{enumerate}
$~$\\
As stated in the introduction of this work we evaluate the performance of the different approaches for uncertainty quantification using common statistical metrics. Namely, we will base the evaluation of the experiments on the following three objectives.$~$\\

\begin{enumerate}
\item \textit{Prediction} for an input: for the anchor model (BLR) and Bayesian deep learning methods (BD, VD, CD) this is the mean of the output of the regression function or the neural network, given the input, when the according parameter is drawn from the posterior distribution (or its approximation). For ensemble based methods (Ens, EnsAdvA, EnsBS) this is the average of the outputs of the single ensemble members for the given input.

We will also compute the \textit{deviation} as the absolute value of the difference between the prediction and the unnoisy ground truth, as defined in section \ref{subsec:genericmodel4benchmark}.
\item \textit{Uncertainty} for an input: for the anchor model (BLR) and Bayesian deep learning methods (BD, VD, CD) this is the standard deviation of the output of the regression function or the neural network, given the input, when the according parameter is drawn from the posterior distribution (or its approximation). For ensemble based methods (Ens, EnsAdvA, EnsBS) this is the standard deviation of the outputs of the single ensemble members for the given input.
\item \textit{Coverage} for an input: draw repeatedly training data under the corresponding model \eqref{eq:genericmodelbenchmark}, for neural network based methods retrain the involved networks, and check whether the deviation, as in point 1, is smaller than 1.96 times the computed uncertainty, as in point 2 (where the 1.96 is motivated from a normal approximation). The coverage denotes the percentage of cases where this succeeded.
For the anchor model we expect a coverage of $95\%$.
\end{enumerate}
$~$\\
\textit{Details on the implementation}: For all experiments below, we used the same network architecture of three hidden, fully connected layers with $128$, $64$, $32$ neurons, a single output neuron and leaky ReLU activations (with negative slope 0.01) between the layers. For the methods based on dropout, dropout layers were added to the next-to-last layer. All methods and networks, together with the benchmarking framework, are implemented by using MATLAB\textsuperscript{\textregistered} and the MATLAB\textsuperscript{\textregistered} ``Deep Learning Toolbox'' \cite{MATLAB:2020b}. All networks were initialized via the Xavier initialisation. For ensemble based methods, we used $120$ ensemble members. In experiments with the sinusoidal generic function \ref{E1} we have $|\text{train set}|=50$ elements in the training set (random distributed in training range) and $|\text{test set}|=10^3$ elements in the test set (chosen equidistantly over the test range). For the experiment \ref{E2} $|\text{train set}|$ scaled with the input dimensionality $d$ as $|\text{train set}|=100\cdot9^{d-1},~d=1,2,3,\ldots$. For the simple 2D polynomial mapping (\ref{E3}) we have $450$ inputs in the train set. The ranges for the test set were $[-6,6]$ for the experiment \ref{E1}, and $[-5,5]$ in every dimension of the input domain for experiment \ref{E2} and \ref{E3}. The range(s) for the train set are $[-4,4]$ (in every input domain) for all above listed experiments.
For all methods, we scaled the number of epochs used for training as $300 \cdot c$, where $c$
was chosen equal to $f_{\mathrm{main}}$ for \ref{E1} and equal to the dimension of the input domain for \ref{E2} and \ref{E3}.
The mini batch size was chosen equal to $|\text{train set}|$, except for ensemble learning with bootstrap (EnsBS), where we took $|\text{train set}|/2$ to run the bootstrapping in a pull with back scenario during one epoch. We used an Adam optimizer \cite{Adam_Kingma2015} with learning rate $10^{-2}$, an $L^2$ regularization of $1/|\text{train set}|$ and momentum parameters $\beta_1 =0.9,\,\beta_2=0.999$ for all experiments.

\subsection{Results} \label{sec:results}
As a first application of the benchmarking framework, we perform the experiment \ref{E1} with different frequencies $f_{\mathrm{main}}$. For every choice of $f_{\mathrm{main}}$ we define a train and a test set as described in section \ref{sec:theory}. As stated above, the input range of the test set is broader, so that we are able to observe the behavior of the methods under test for out-of-distribution data. For the evaluation, we will distinguish between the out-of-distribution range and the range used in the train set (in-distribution). Two examples are illustrated in fig. \ref{fig:exampleOne} and \ref{fig:exampleTwo}. Both figs. show the predictions (solid line) and associated uncertainties times 1.96 (shaded) provided by the anchor model (BLR), Bernoulli Dropout (BD), Variational Dropout (VD) and ensemble with bootstrap (EnsBS), for a ``complexity level'' of $f_{\mathrm{main}}=2$ (see fig. \ref{fig:exampleOne}) and $f_{\mathrm{main}}=5$ (see fig. \ref{fig:exampleTwo}). The ground truth is illustrated as a dashed red line. The ground truth is completely covered by the predictions and associated uncertainties (times 1.96) of the BLR \eqref{eq:BayesSolutionPredictionSolution}. In the in-distribution range nearly all methods cover the ground truth by their associated uncertainties. In the out-of-distribution range the uncertainty grows for the deep learning methods, but is often to small to cover the ground truth.

\begin{figure}[b!]
     \centering
     \begin{subfigure}[b]{0.24\textwidth}
         \centering
         \includegraphics[width=\textwidth]{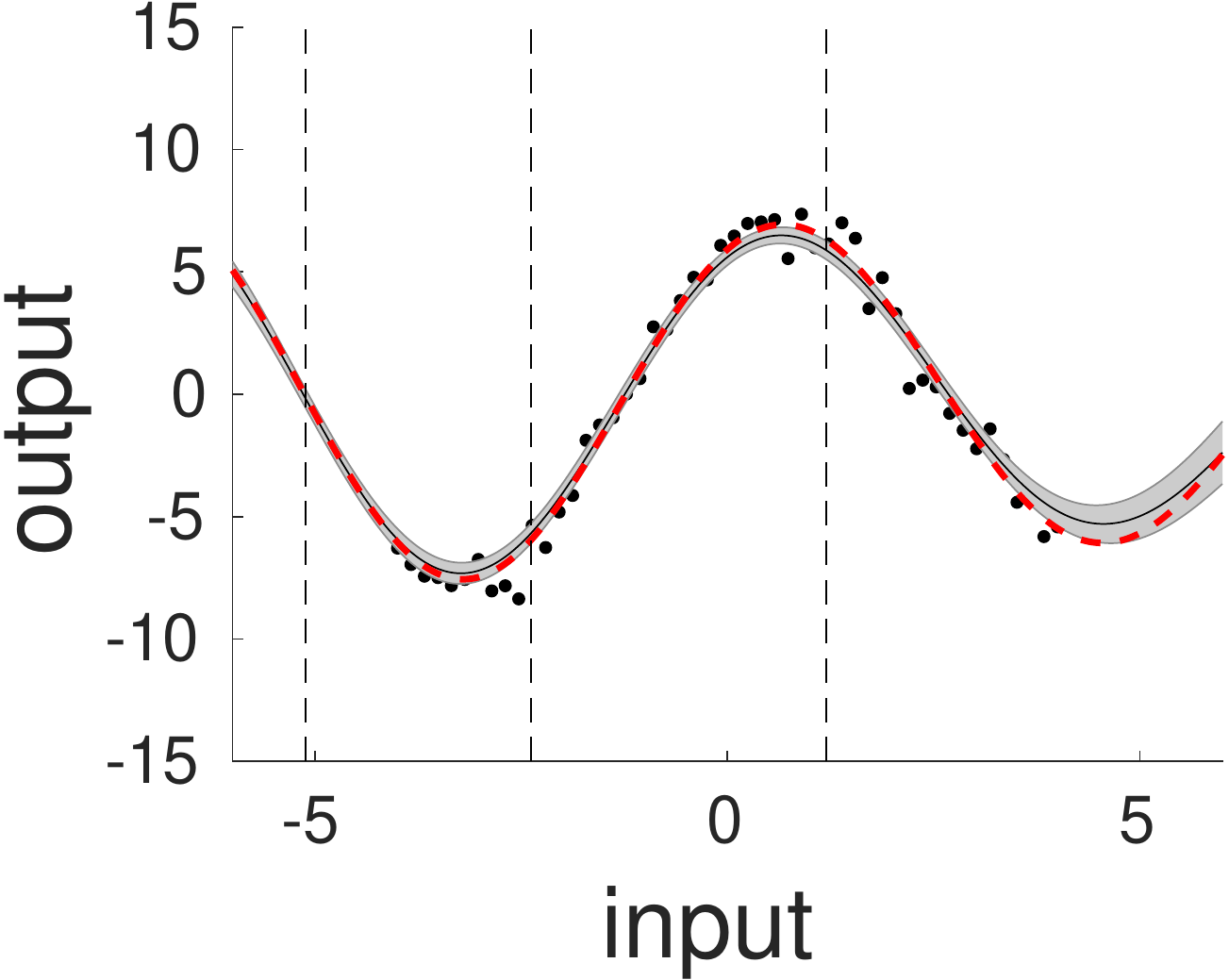}
         \caption{BLR}
     \end{subfigure}
     \hfill
     \begin{subfigure}[b]{0.24\textwidth}
         \centering
         \includegraphics[width=\textwidth]{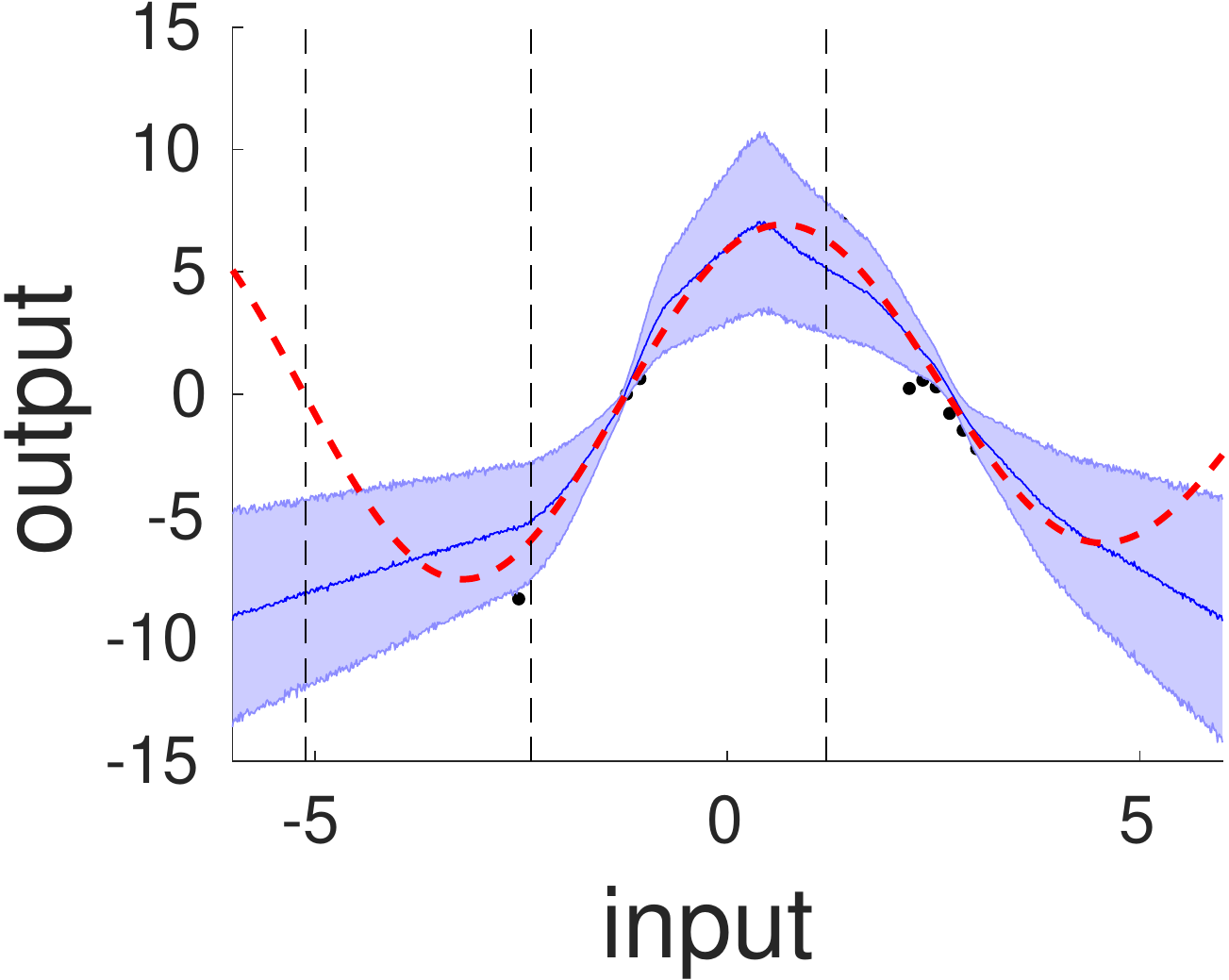}
         \caption{BD}
     \end{subfigure}
     \hfill
     \begin{subfigure}[b]{0.24\textwidth}
         \centering
         \includegraphics[width=\textwidth]{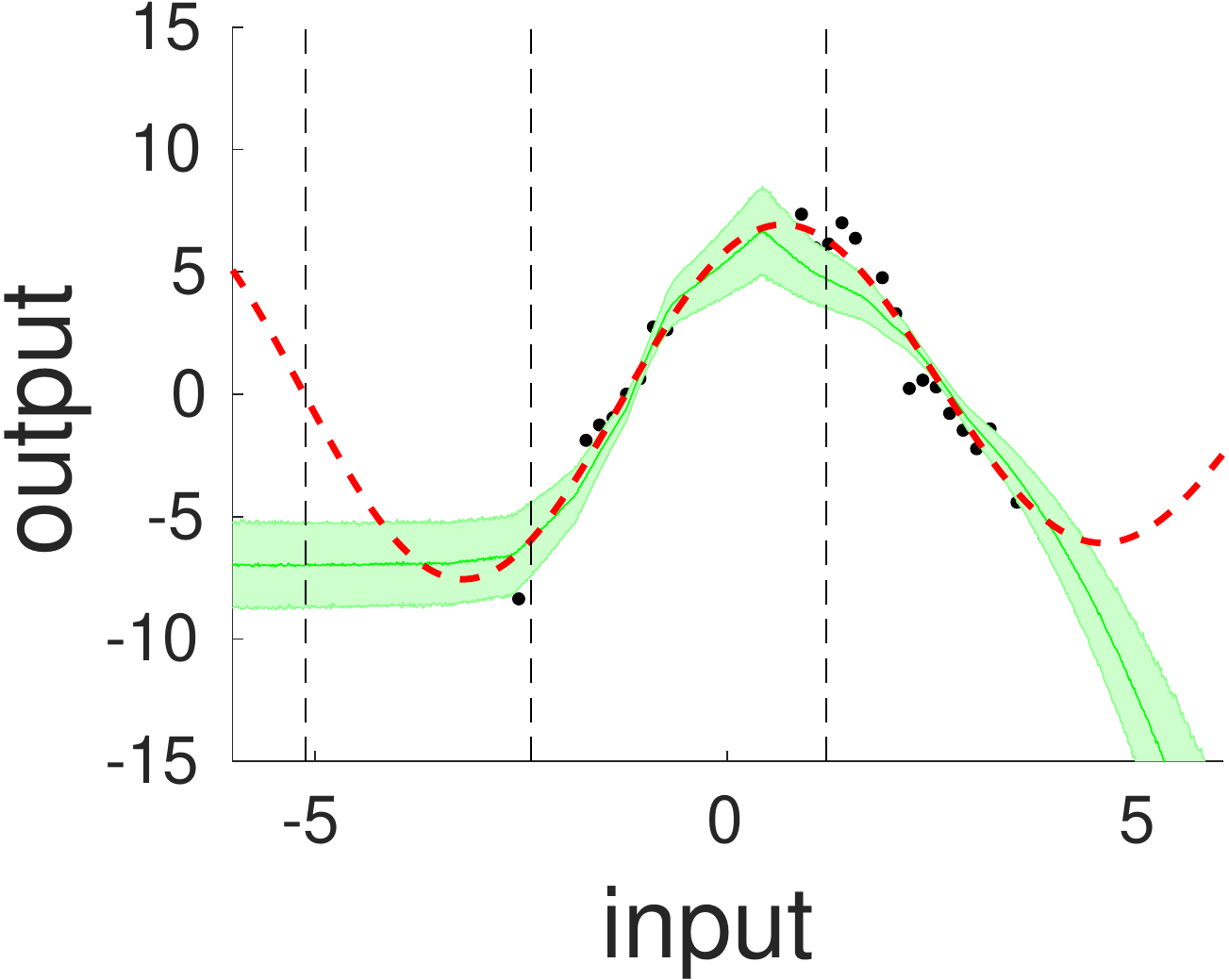}
         \caption{VD}
     \end{subfigure}
     \hfill
     \begin{subfigure}[b]{0.24\textwidth}
         \centering
         \includegraphics[width=\textwidth]{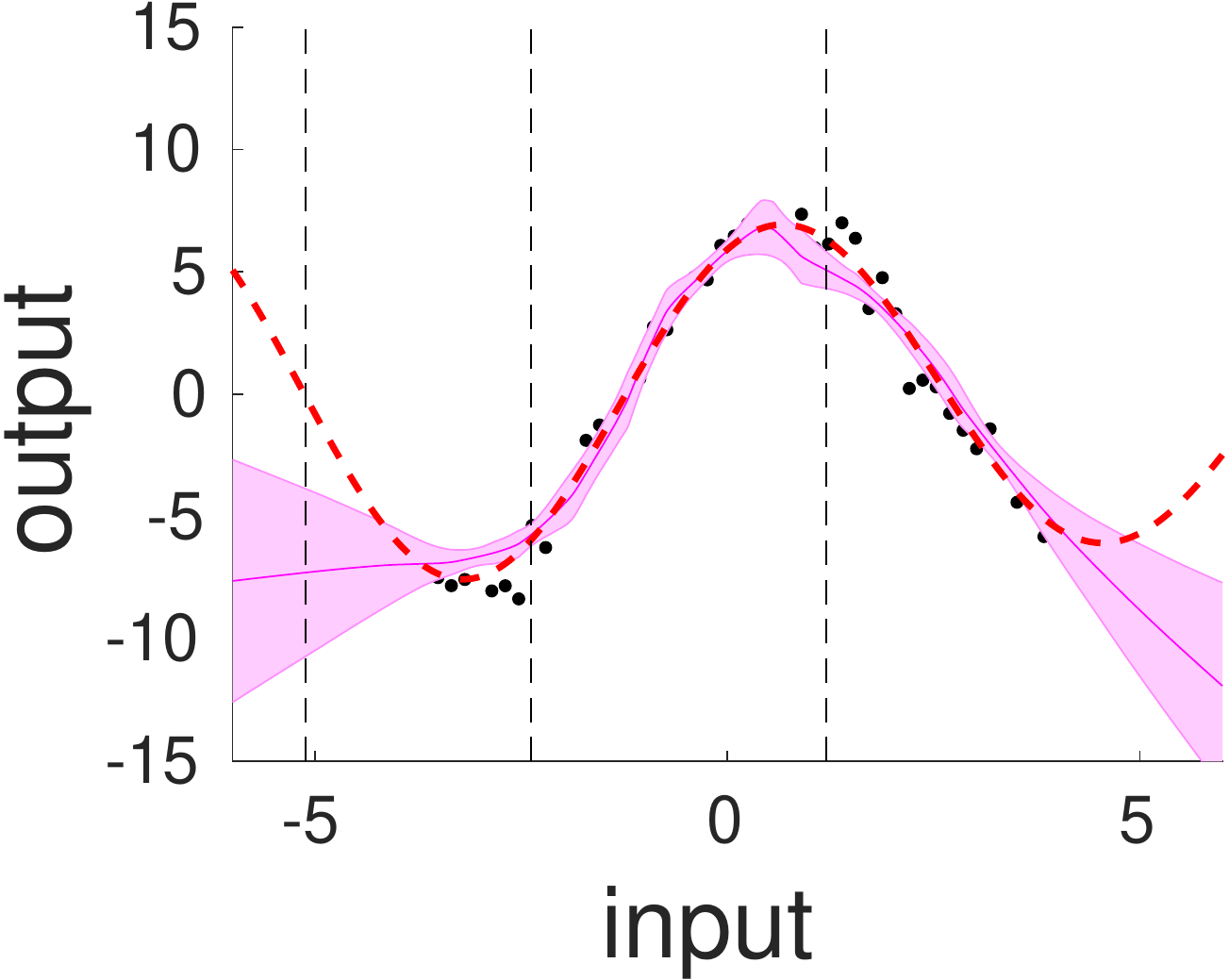}
         \caption{EnsBS}
     \end{subfigure}
        \caption{Experiment \ref{E1} with $f_{\mathrm{main}}=1$. Predictions (solid lines) and calculated uncertainties times 1.96 (shaded areas) of the anchor model BLR (subplot (a)) and different methods in deep learning (subplots (b)-(d)), together with the used train set (black dots) and ground truth (red dashed line). The abbreviations for the methods are as in the beginning of section \ref{sec:results}.} 
        \label{fig:exampleOne}
\end{figure}

\begin{figure}[b!]
     \centering
     \begin{subfigure}[b]{0.24\textwidth}
         \centering
         \includegraphics[width=\textwidth]{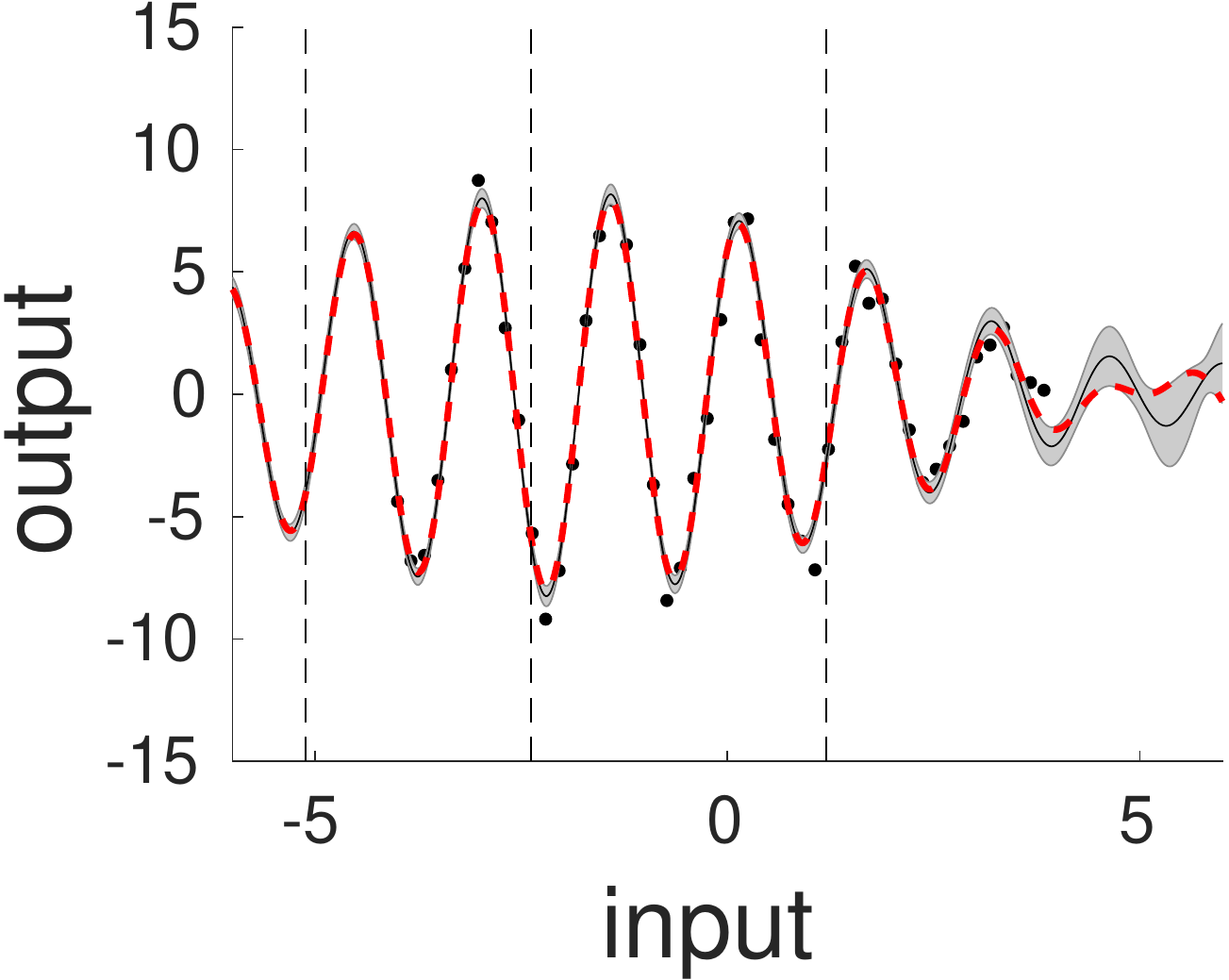}
         \caption{BLR}
     \end{subfigure}
     \hfill
     \begin{subfigure}[b]{0.24\textwidth}
         \centering
         \includegraphics[width=\textwidth]{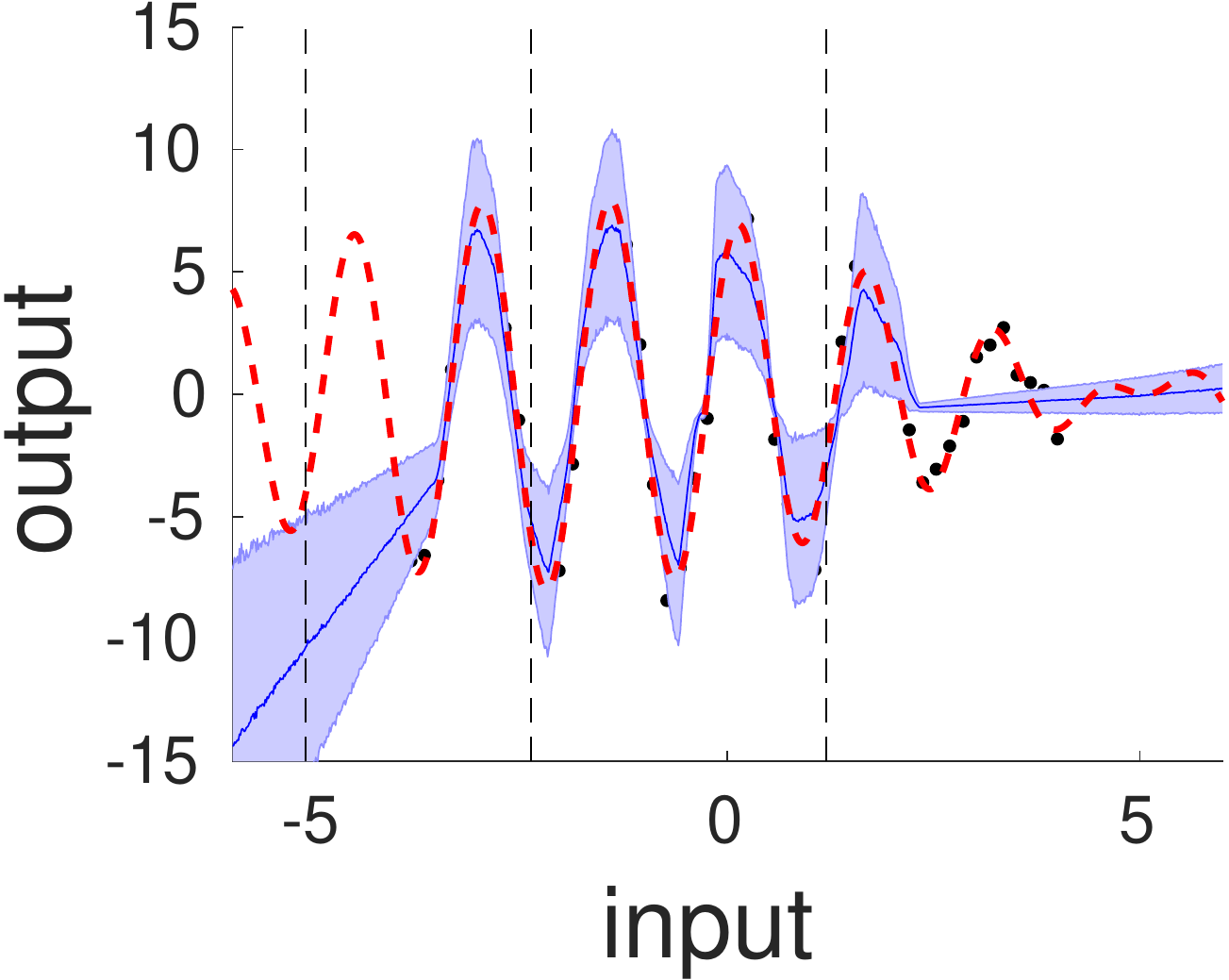}
         \caption{BD}
     \end{subfigure}
     \hfill
     \begin{subfigure}[b]{0.24\textwidth}
         \centering
         \includegraphics[width=\textwidth]{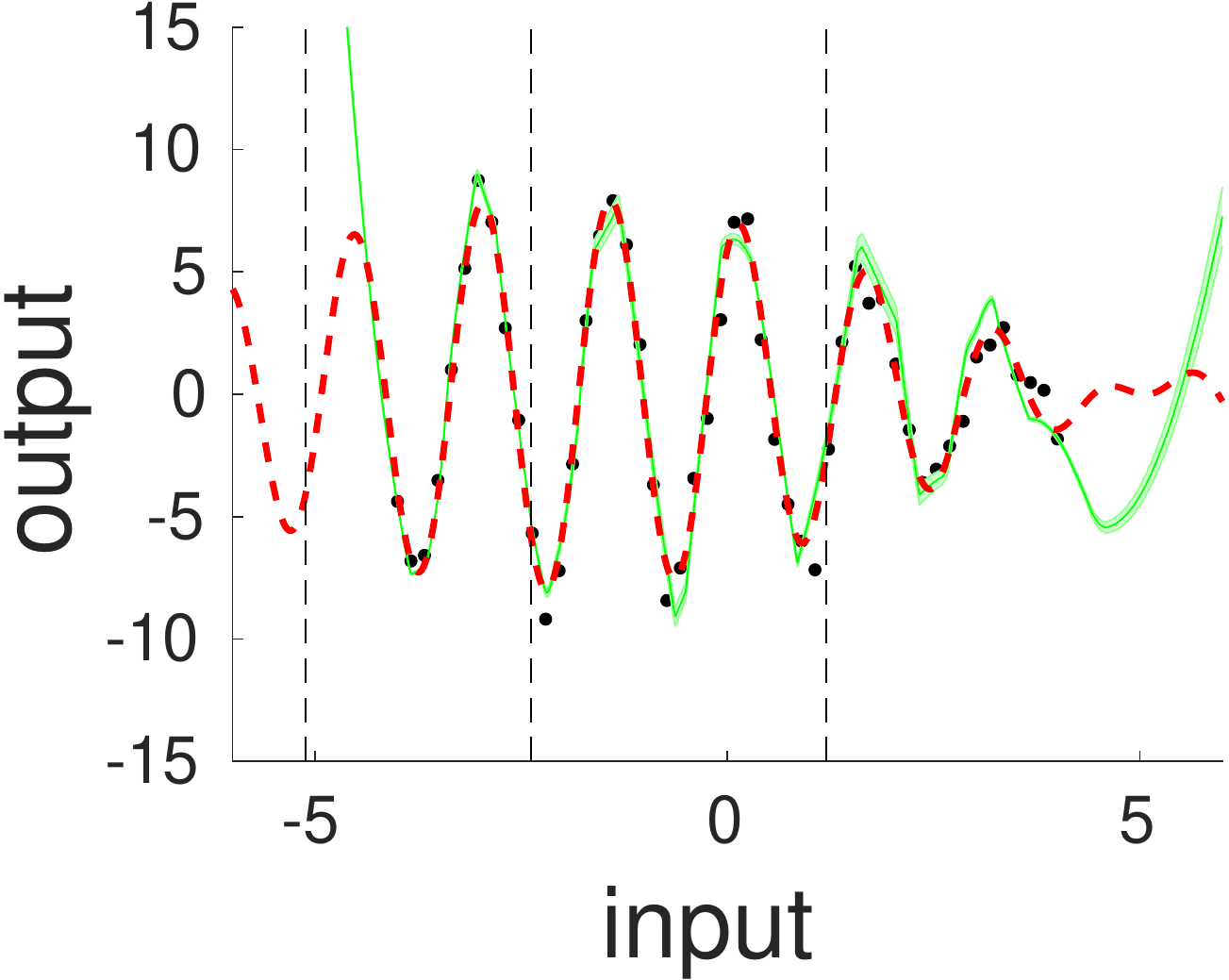}
         \caption{VD}
     \end{subfigure}
     \hfill
     \begin{subfigure}[b]{0.24\textwidth}
         \centering
         \includegraphics[width=\textwidth]{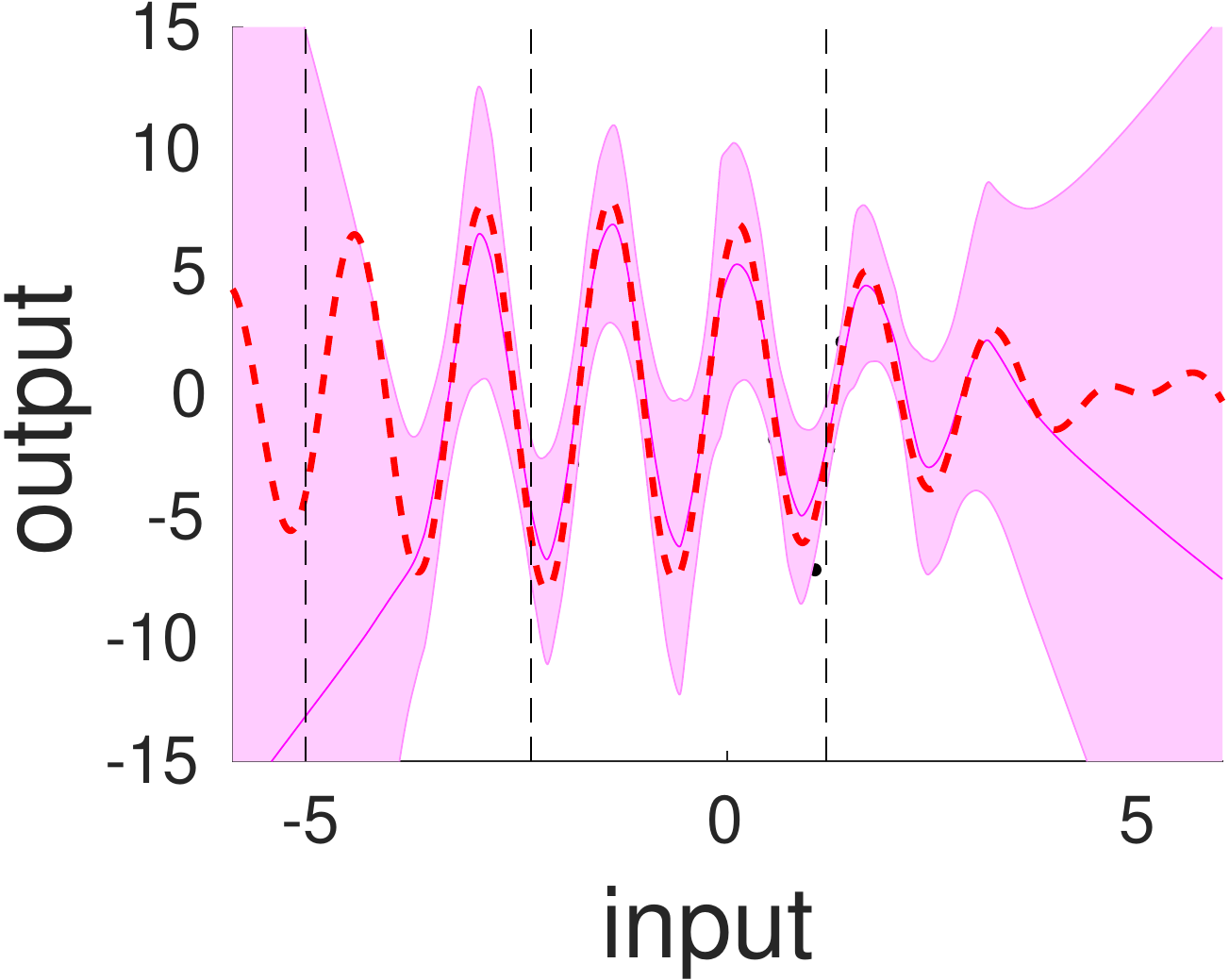}
         \caption{EnsBS}
     \end{subfigure}
        \caption{Experiment \ref{E1} with $f_{\mathrm{main}}=5$. Predictions (solid lines) and calculated uncertainties times 1.96 (shaded areas) of the anchor model BLR (subplot (a)) and different methods in deep learning (subplots (b)-(d)), together with the used train set (black dots) and ground truth (red dashed line). The abbreviations for the methods are as in the beginning of section \ref{sec:results}.}
        \label{fig:exampleTwo}
\end{figure}

Fig. \ref{fig:_3planes_of_dice_iod_ood} summarizes the deviation between the ground truth and the prediction, the uncertainty and the coverage, as specified above, for two in-distribution inputs ($x=-2.38$ and $x=1.2$) and one out-of-distribution input ($x=-5.11$) over varying complexity, i.e increasing frequency $f_{\mathrm{main}}$.
We repeated the experiment, for every $f_{\mathrm{main}}$, $k=50$ times with the same test set but with different realization of the added noise of the train set and re-trained the neural networks in each repetition.

The two left columns show the average of the deviation from the ground truth and the uncertainty together with their associated standard error for the different choices of input $x$. For the coverage (third column from the right) the size of the error bars were generated by using the fact that a sum of $k$ Bernoulli distributed numbers (ground truth in or outside the tube spanned by the $1.96$ times the standard deviation around the estimate) are Binomial distributed with variance $p(1-p)/k$, where p is the success probability, for which the calculated coverage over $k$ repetitions was taken.

\begin{figure}[t!]
     \centering
     \begin{subfigure}[b]{0.3\textwidth}
         \centering
         \includegraphics[width=\textwidth]{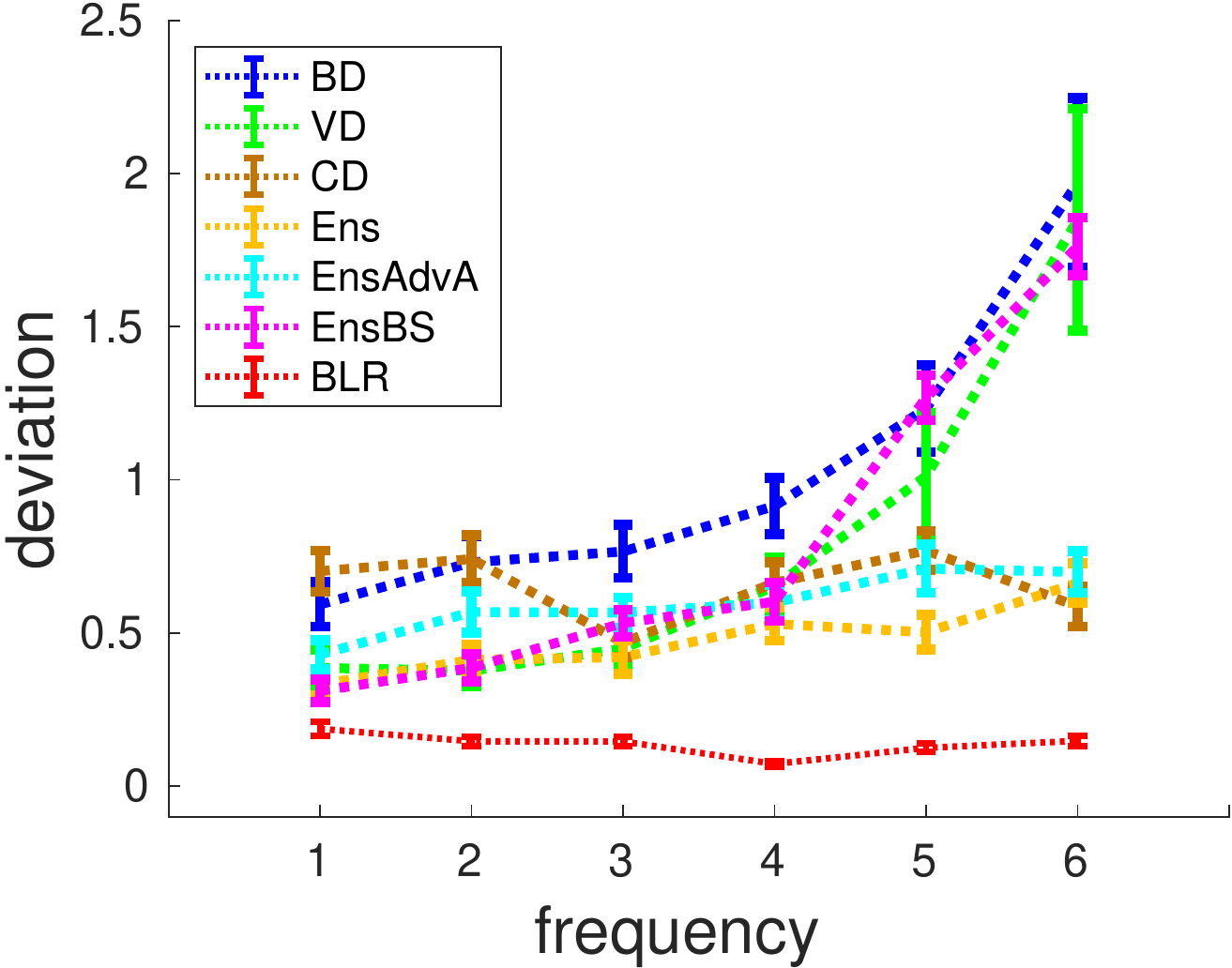}
         \caption{$x=-2.38$}
     \end{subfigure}
     \hfill
     \begin{subfigure}[b]{0.3\textwidth}
         \centering
         \includegraphics[width=\textwidth]{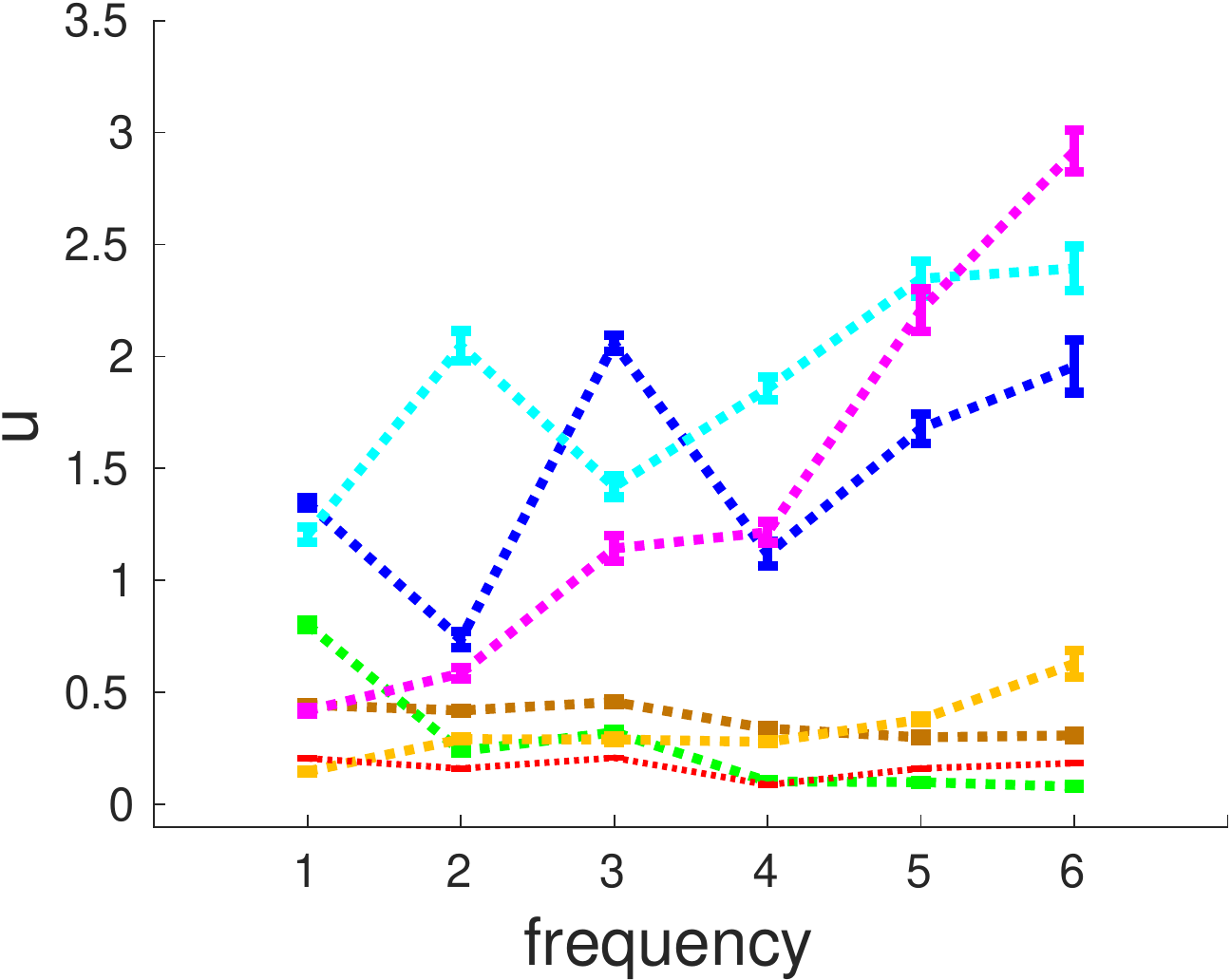}
         \caption{$x=-2.38$}
     \end{subfigure}
     \hfill
     \begin{subfigure}[b]{0.3\textwidth}
         \centering
         \includegraphics[width=\textwidth]{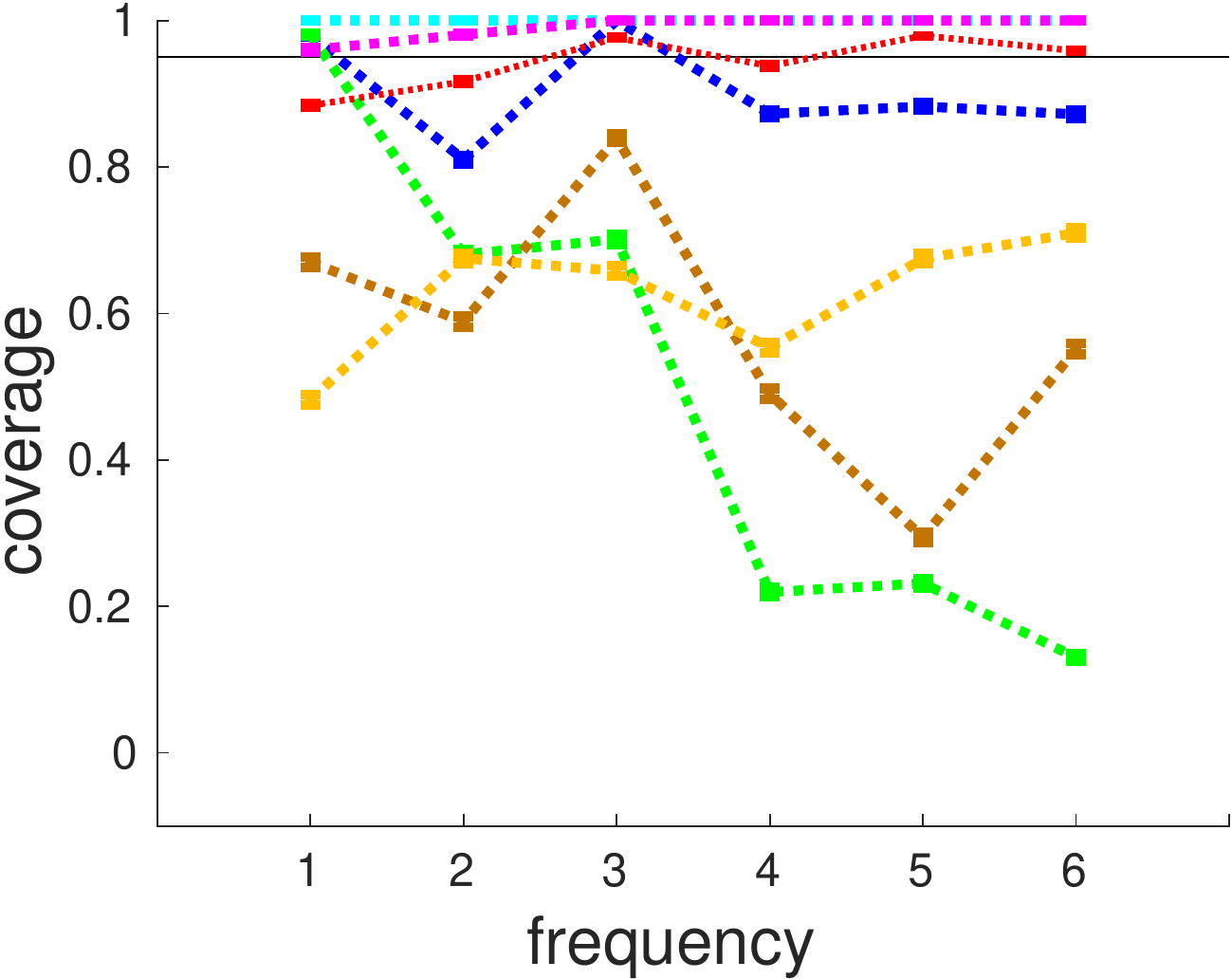}
         \caption{$x=-2.38$}
     \end{subfigure}\\
      \begin{subfigure}[b]{0.3\textwidth}
         \centering
         \includegraphics[width=\textwidth]{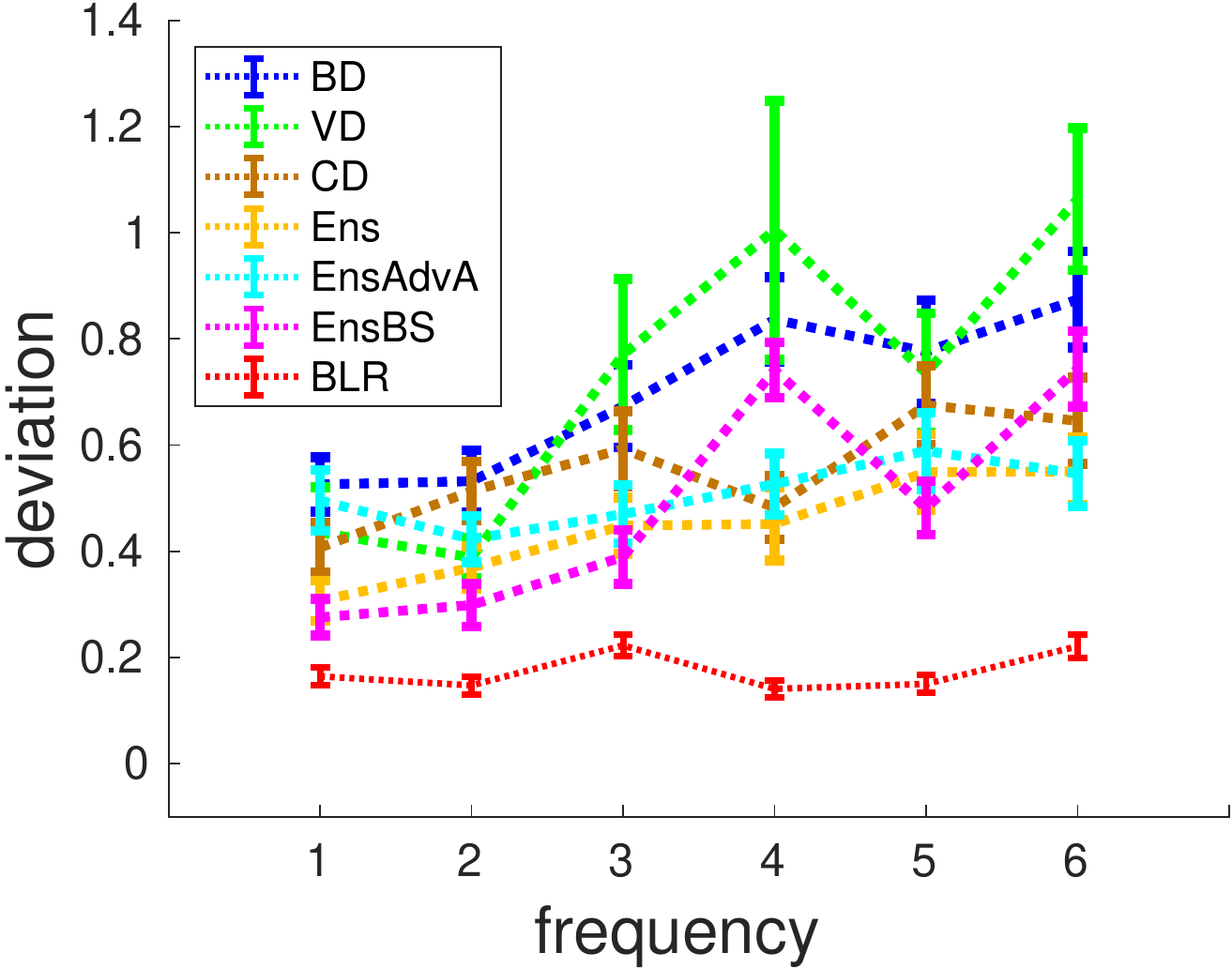}
         \caption{$x=1.2$}
     \end{subfigure}
     \hfill
     \begin{subfigure}[b]{0.3\textwidth}
         \centering
         \includegraphics[width=\textwidth]{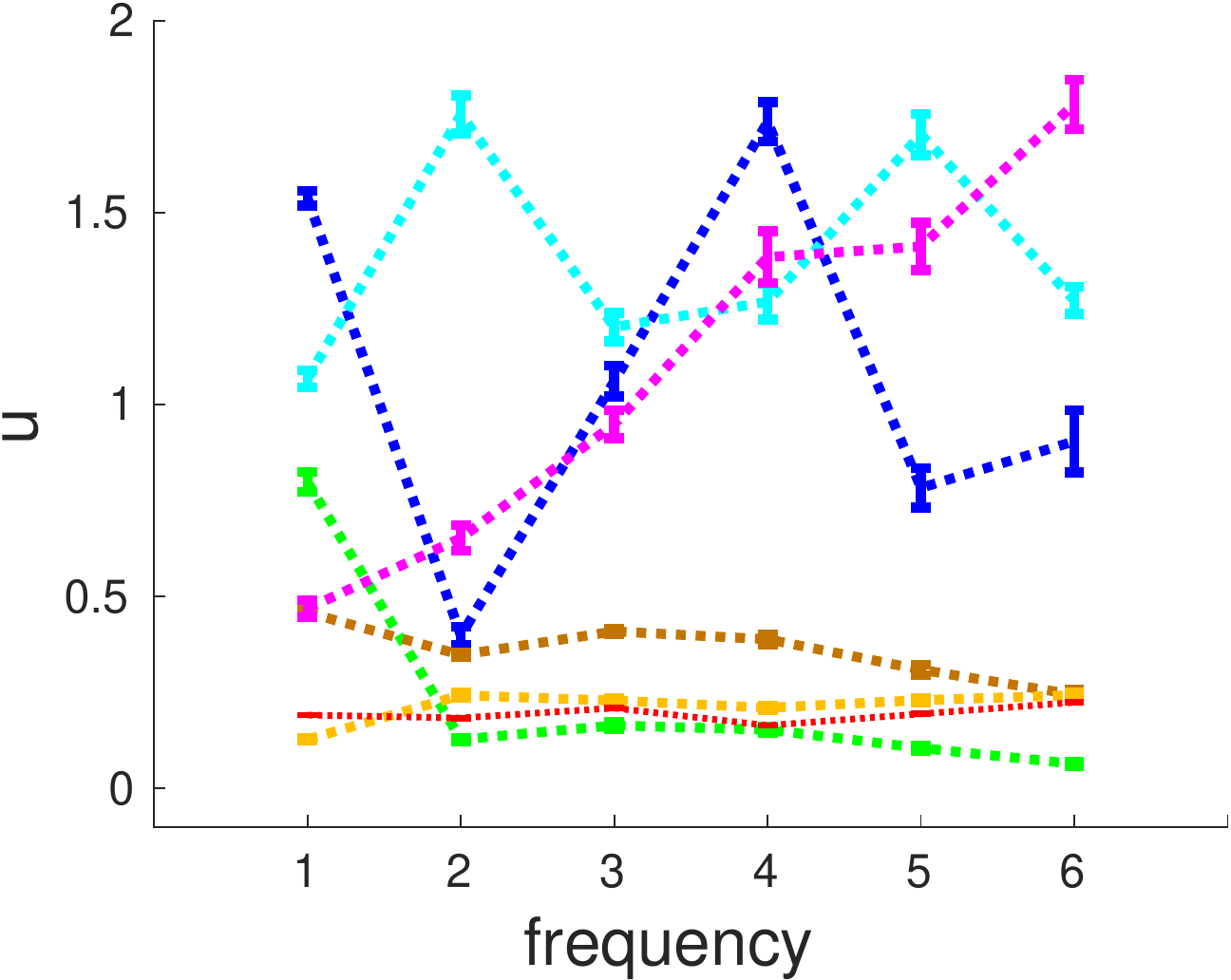}
         \caption{$x=1.2$}
     \end{subfigure}
     \hfill
     \begin{subfigure}[b]{0.3\textwidth}
         \centering
         \includegraphics[width=\textwidth]{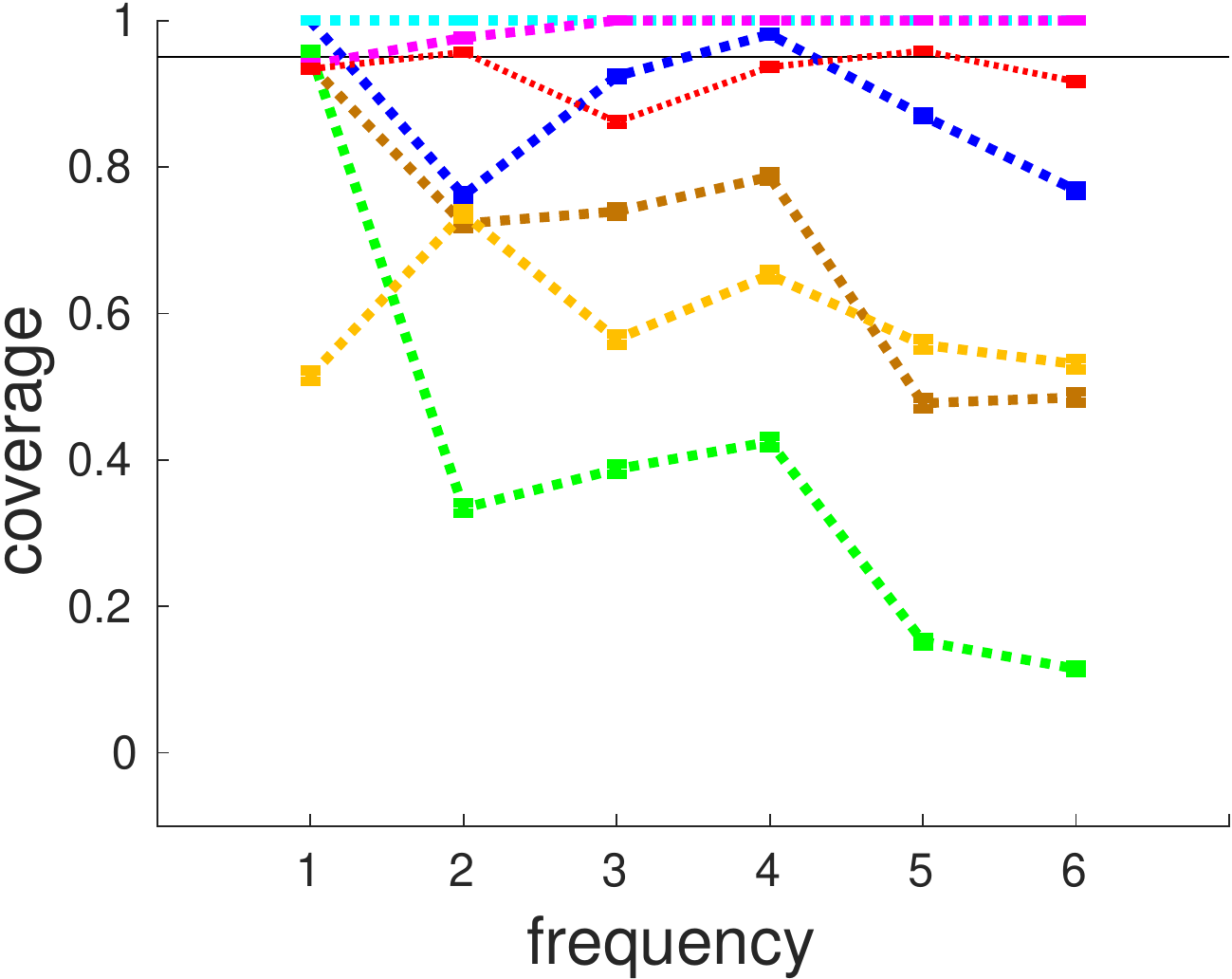}
         \caption{$x=1.2$}
     \end{subfigure}\\
      \begin{subfigure}[b]{0.3\textwidth}
         \centering
         \includegraphics[width=\textwidth]{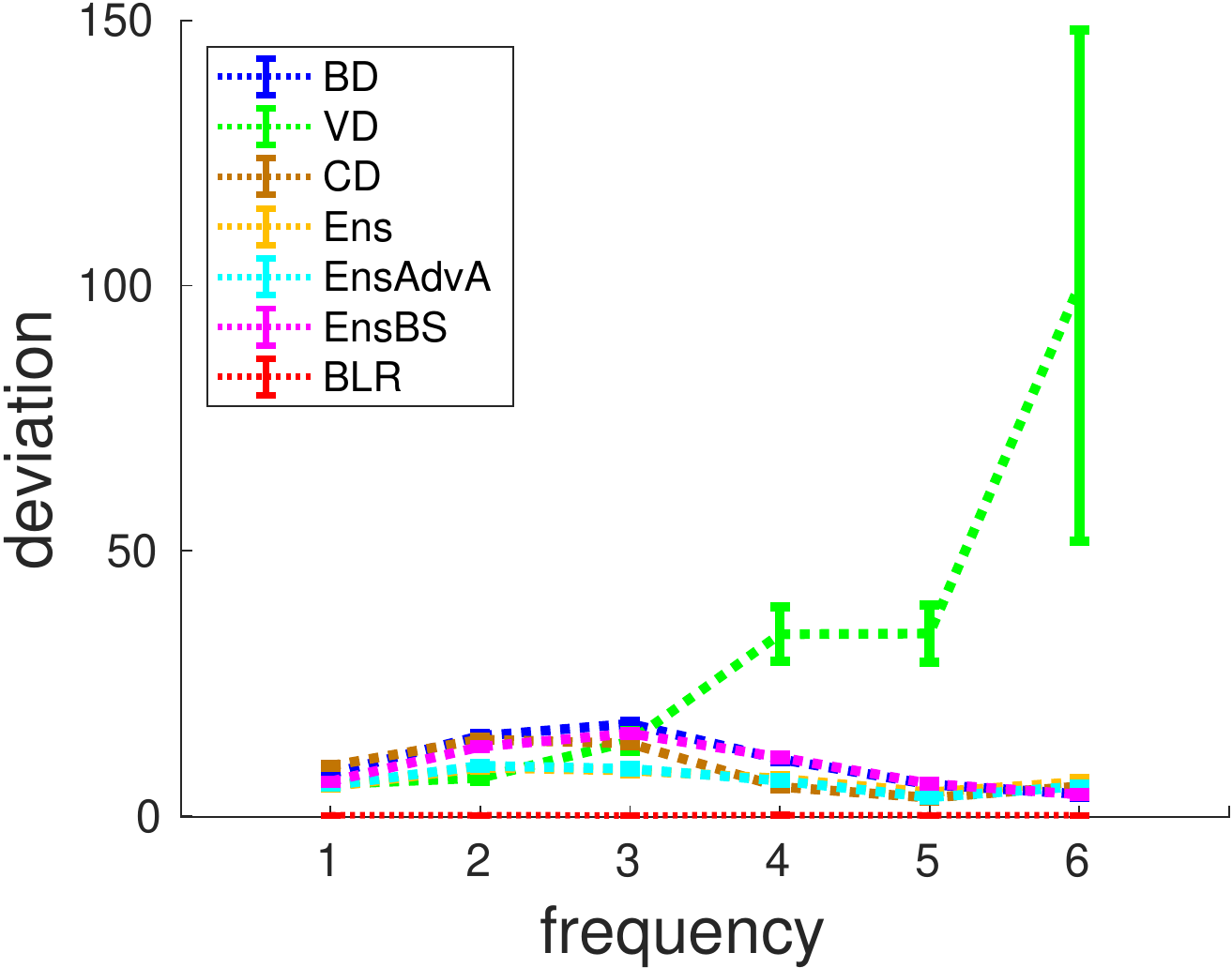}
         \caption{$x=-5.11$}
     \end{subfigure}
     \hfill
     \begin{subfigure}[b]{0.3\textwidth}
         \centering
         \includegraphics[width=\textwidth]{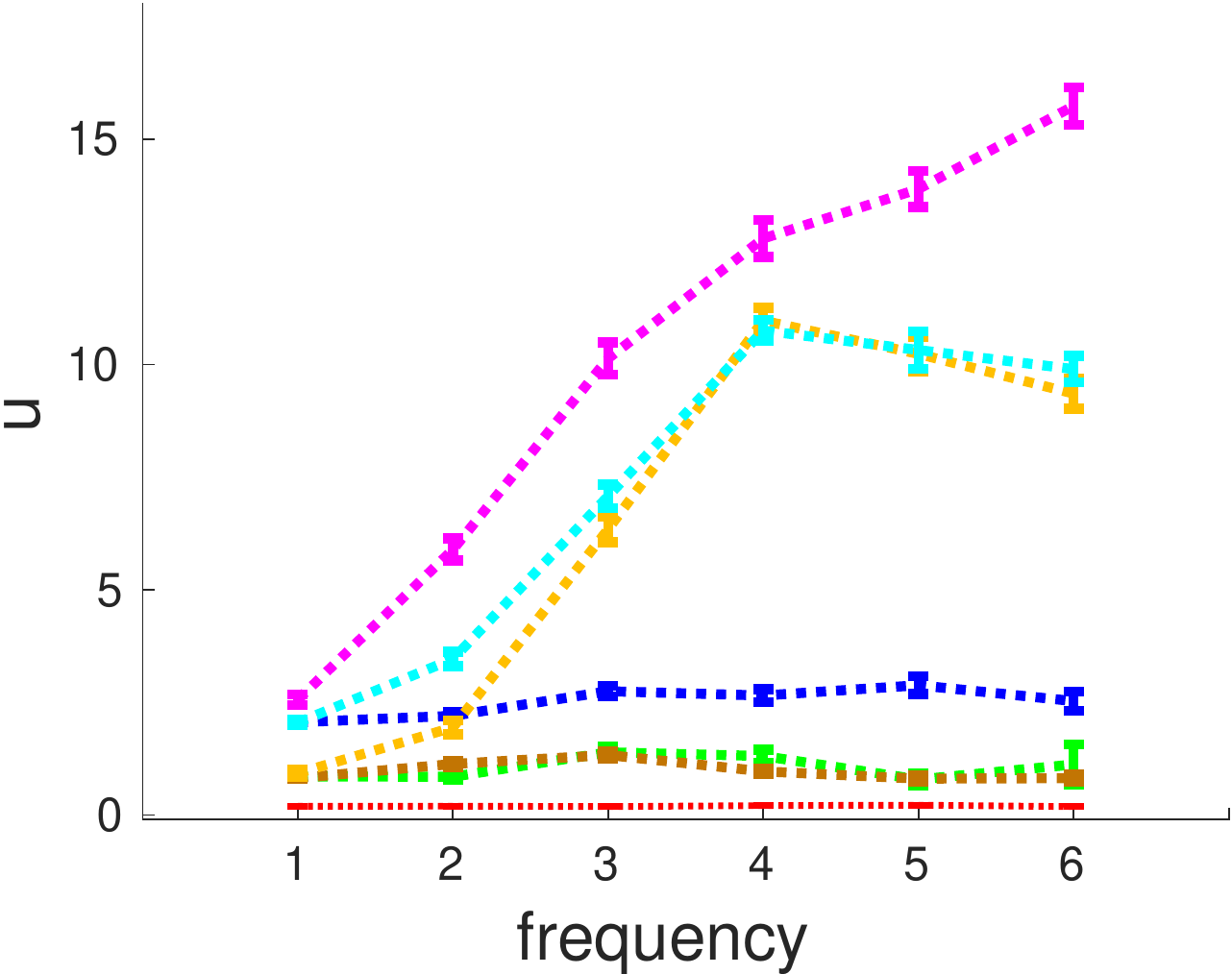}
         \caption{$x=-5.11$}
     \end{subfigure}
     \hfill
     \begin{subfigure}[b]{0.3\textwidth}
         \centering
         \includegraphics[width=\textwidth]{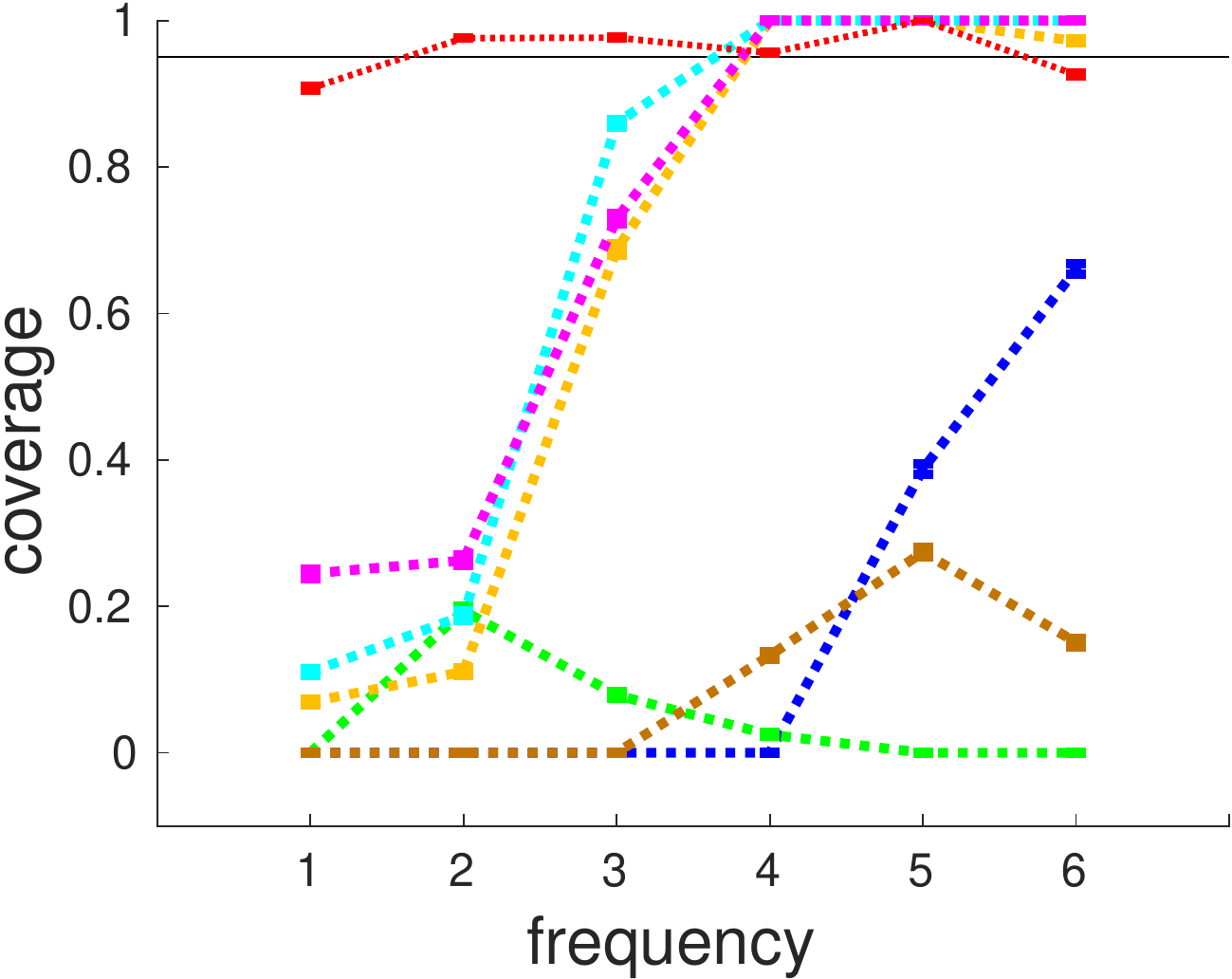}
         \caption{$x=-5.11$}
     \end{subfigure}
        \caption{Experiment \ref{E1} for various choices of the frequency $f_{\mathrm{main}}$.
        The three columns show for the anchor model and different methods from deep learning the deviation from the ground truth (left column), the uncertainty (middle column) and the coverage  of the ground truth by the uncertainty times 1.96 (right column) for three different inputs $x=-2.38$ (first row), $x=1.2$ (second row) and $x=-5.11$ (third row, out-of-distribution) 
        and different choices of the frequency $f_{\mathrm{main}}$ (on the abscissa). The plots were generated with 50 runs. For the deviation and uncertainty the mean together with its standard errors are plotted.} 
        \label{fig:_3planes_of_dice_iod_ood}
\end{figure}

The BLR-solution (red in fig. \ref{fig:_3planes_of_dice_iod_ood}) provides a complexity independent coverage probability of $0.95$ , i.e. $95\%$ of the ground truth is covered by the estimate $\pm 1.96$ times the standard deviation, as expected from the theory. At the same time, the deviation of the prediction of the BLR from the ground truth is quite low. The uncertainty is also comparably low (compare fig. \ref{fig:_3planes_of_dice_iod_ood}), but large enough to ensure $95\%$ coverage.

As motivated by lemma \ref{lem:optimality} we can, in a certain sense, consider a deep learning method as ``optimal'' if it reaches the BLR-solution in all three characteristics: deviation, uncertainty and coverage. Every method under test in the benchmarking framework should thus strive for the goal to reach the corner of a cube, defined by these three quantities, where the BLR-solution is located. Howevever, since the number of parameters in a deep regression drastically exceeds that of the BLR model, we cannot expect a deep regression approach to actually match the BLR-solution.

In fig. \ref{fig:_3planes_of_dice_iod_ood}, row 1 and 2, we observe an increasing deviation by increasing complexity for all methods, except for BLR, for inputs of the in-distribution (row 1 $\rightarrow$ $x=-2.38$, row 2 $\rightarrow$ $x=1.20$).  At the same time we can observe, in the subplots of the second column, that the size of the associated uncertainty over increasing frequency is only slightly increasing or constant for the approaches based on dropout (BD, VD and CD). As predictable from subplot (a) and (b) the coverage, depicted in (c), is poor for the Dropout-based methods and substantially lower as the $0.95$ provided by the anchor model. The non-standard ensemble methods (EnsAdvA and EnsBS), on the other hand, deliver a coverage near $0.95$ but overestimate for larger complexities their uncertainty, which be can be seen by the coverage of almost 1.0. Note, that for the considered scenario the uncertainty of the anchor model is below the uncertainties of all other models.

Row 3 of fig. \ref{fig:_3planes_of_dice_iod_ood} shows the results for the case of one out-of-distribution input test data ($x=-5.11$) in the evaluation of the method. We encounter a slightly different behavior. The deviation from the ground truth is not increasing with increasing complexity for all methods, except for VD. The size of the associated uncertainty is not increasing and too low to deliver a sufficient coverage, except for the ensemble methods. The reason for this behavior seems to lie in the poor or not sufficient quality of the prediction in the out-of-distribution range together with an underestimated uncertainty.\\

In regard to the figs. above, all ensemble methods absorb the increasing deviation over increasing complexity with an increasing uncertainty, especially  EnsAdvA and EnsBS. This results in a good, but partially overestimated, coverage near the value provided by the anchor method (BLR). All methods based on the dropout technique (BD, CD and VD) underestimate the uncertainty with simultaneously increasing deviations over increasing complexity, whereby a good coverage (ideal $0.95$) is not possible.\\ 

\begin{figure}[t!]
     \centering
     \begin{subfigure}[b]{0.24\textwidth}
         \centering
         \includegraphics[width=\textwidth]{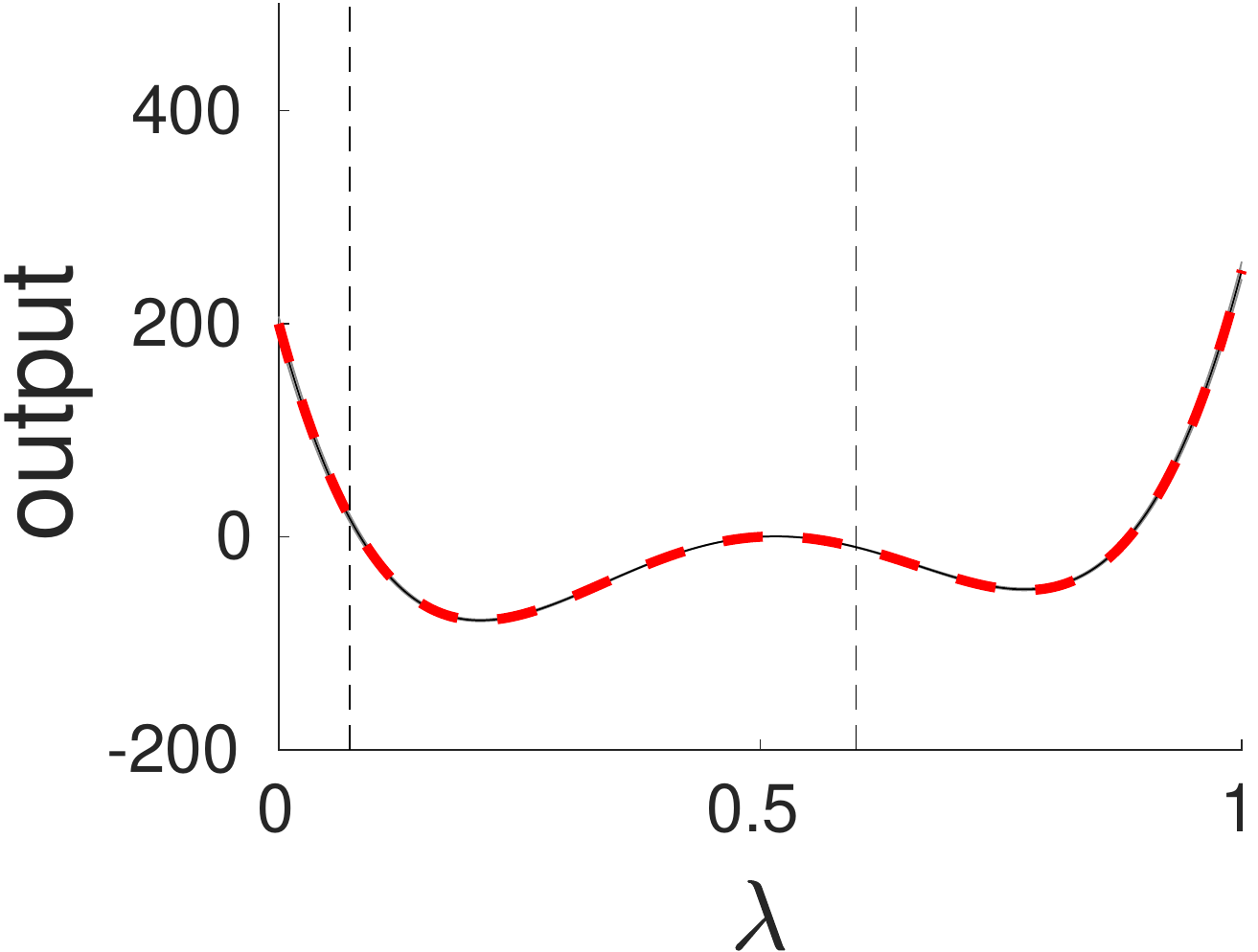}
         \caption{BLR}
     \end{subfigure}
     \hfill
     \begin{subfigure}[b]{0.24\textwidth}
         \centering
         \includegraphics[width=\textwidth]{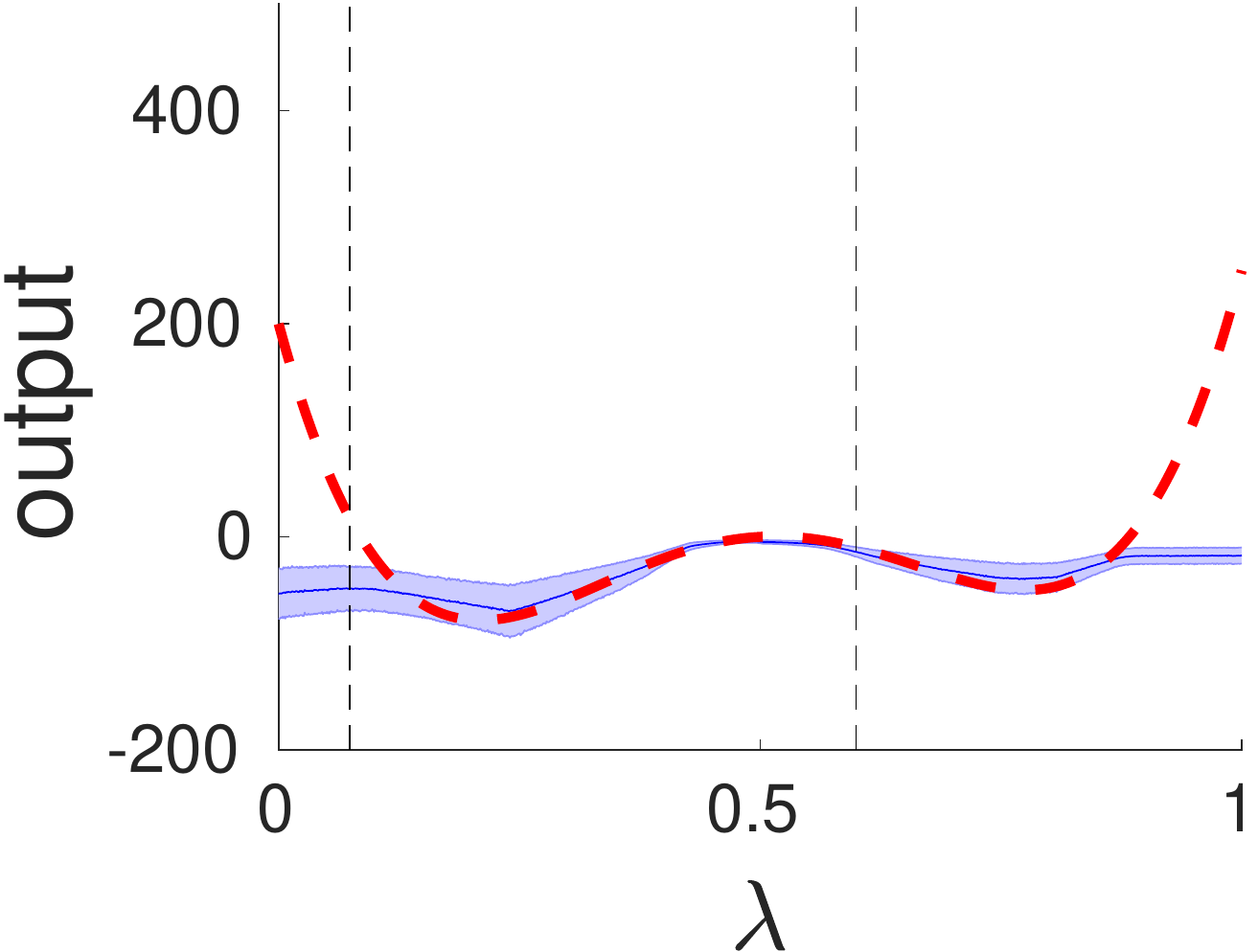}
         \caption{BD}
     \end{subfigure}
     \hfill
     \begin{subfigure}[b]{0.24\textwidth}
         \centering
         \includegraphics[width=\textwidth]{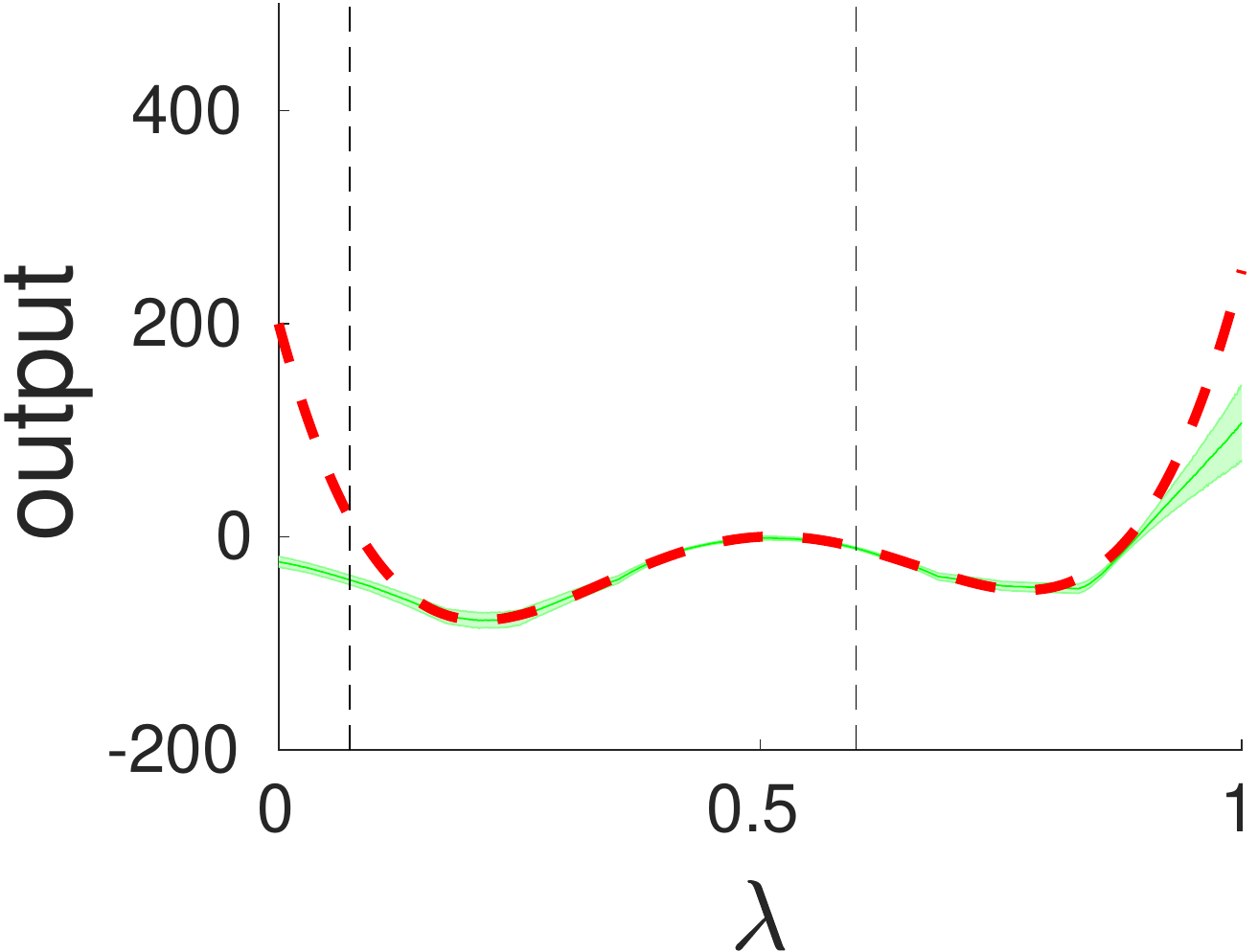}
         \caption{VD}
     \end{subfigure}
     \hfill
     \begin{subfigure}[b]{0.24\textwidth}
         \centering
         \includegraphics[width=\textwidth]{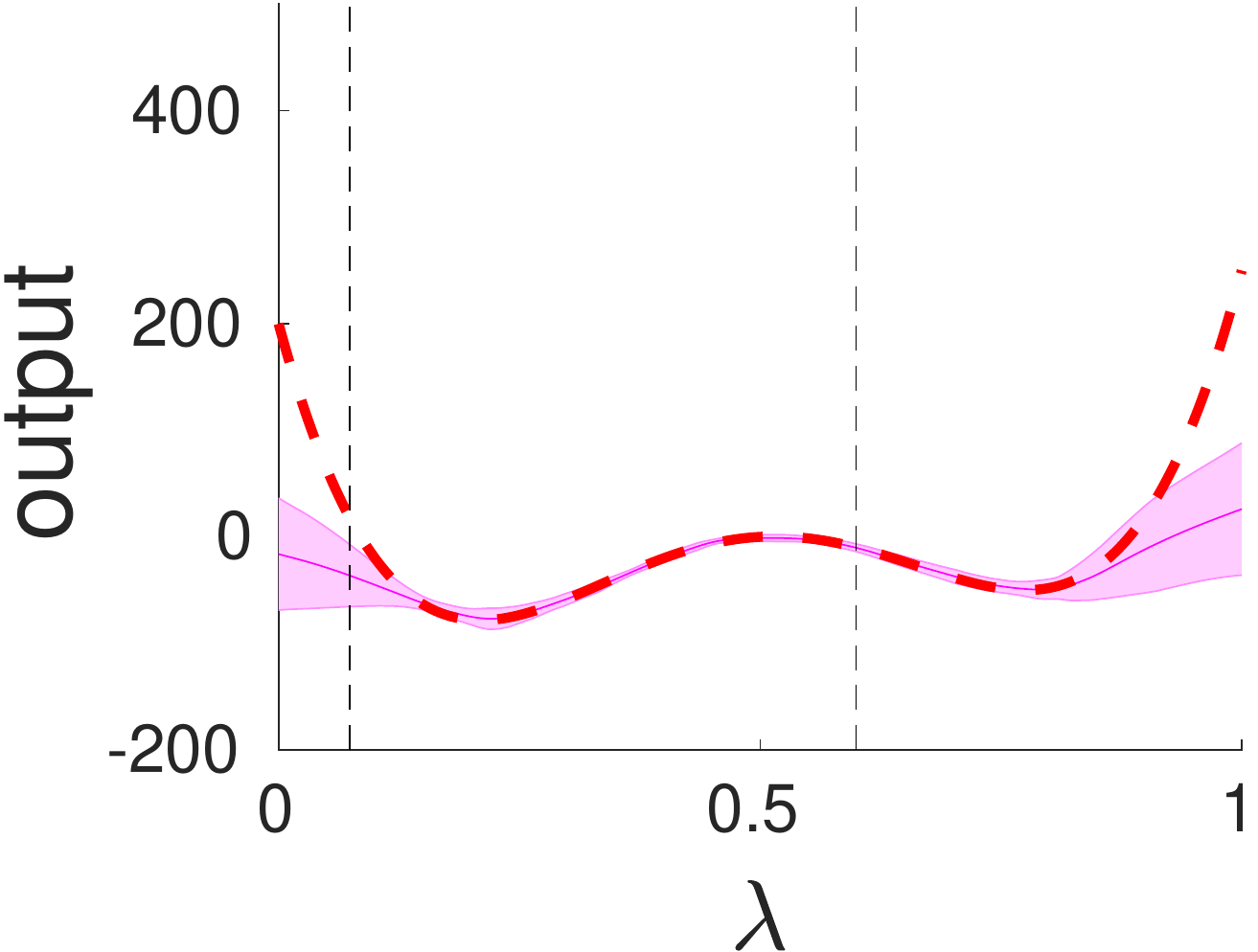}
         \caption{EnsBS}
     \end{subfigure}
        \caption{
        Experiment \ref{E2} in dimension $d=2$. Predictions (solid lines) and according uncertainties times 1.96 (shaded areas) of the anchor model BLR (subplot (a) - uncertainties are very small, barely visible) and different methods in deep learning (subplots (b)-(d)) along a diagonal of the $d$-dimensional hypercube (cf. section \ref{sec:experiments}), together with the ground truth (red dashed line). The abbreviations for the methods in the beginning of section \ref{sec:results}. 
        }
        \label{fig:exampleE2_dimlow}
\end{figure}

\begin{figure}[]
     \centering
     \begin{subfigure}[b]{0.24\textwidth}
         \centering
         \includegraphics[width=\textwidth]{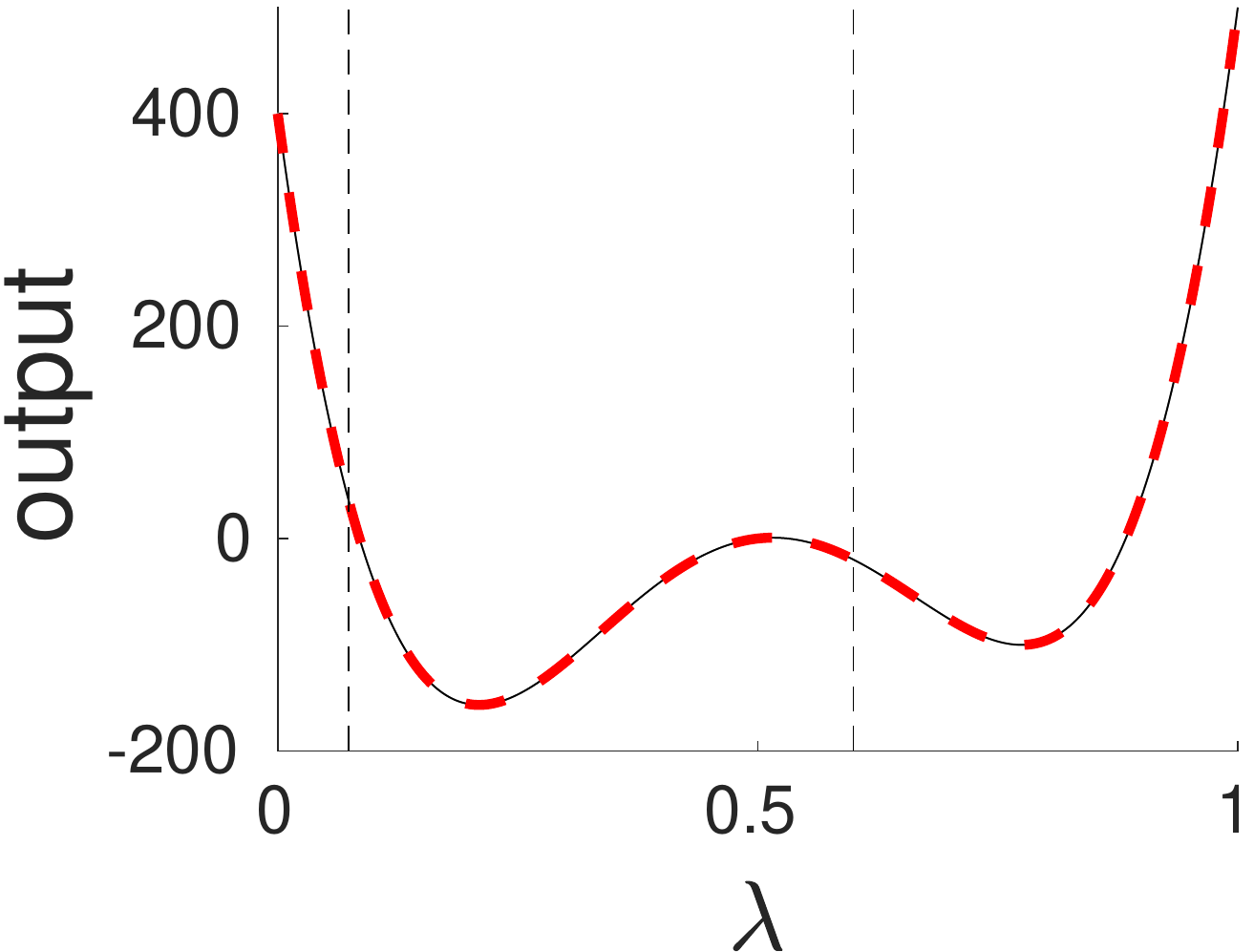}
         \caption{BLR}
     \end{subfigure}
     \hfill
     \begin{subfigure}[b]{0.24\textwidth}
         \centering
         \includegraphics[width=\textwidth]{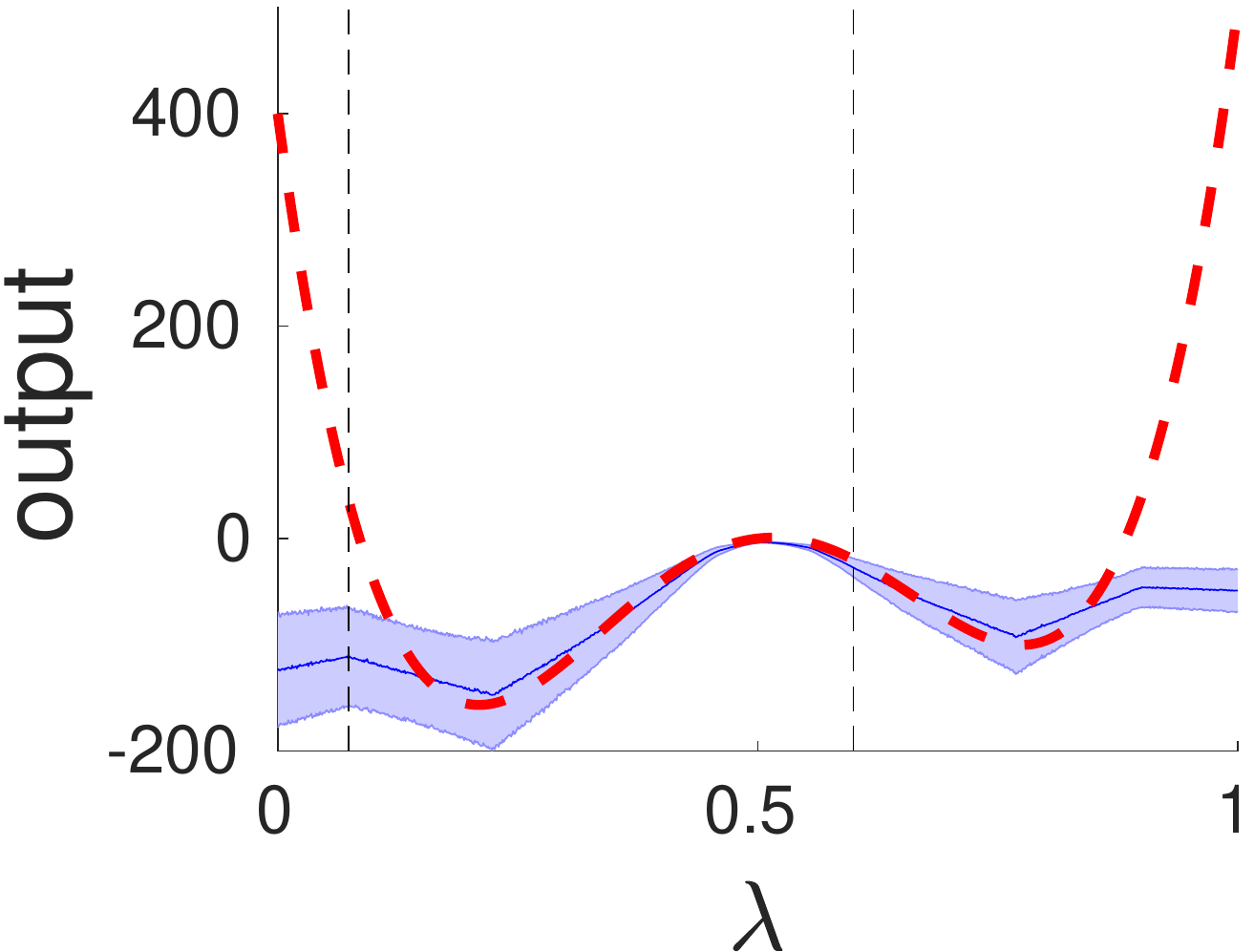}
         \caption{BD}
     \end{subfigure}
     \hfill
     \begin{subfigure}[b]{0.24\textwidth}
         \centering
         \includegraphics[width=\textwidth]{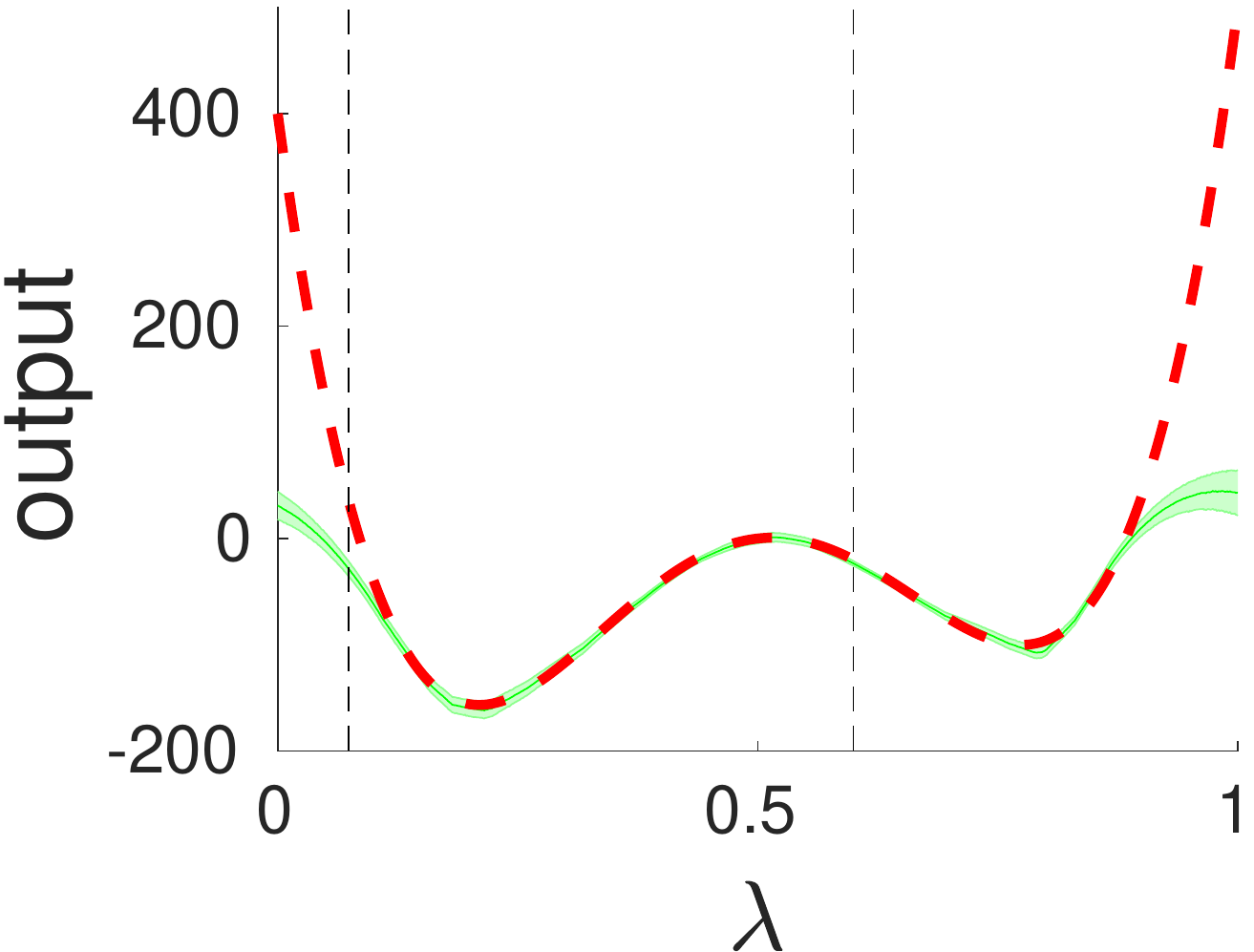}
         \caption{VD}
     \end{subfigure}
     \hfill
     \begin{subfigure}[b]{0.24\textwidth}
         \centering
         \includegraphics[width=\textwidth]{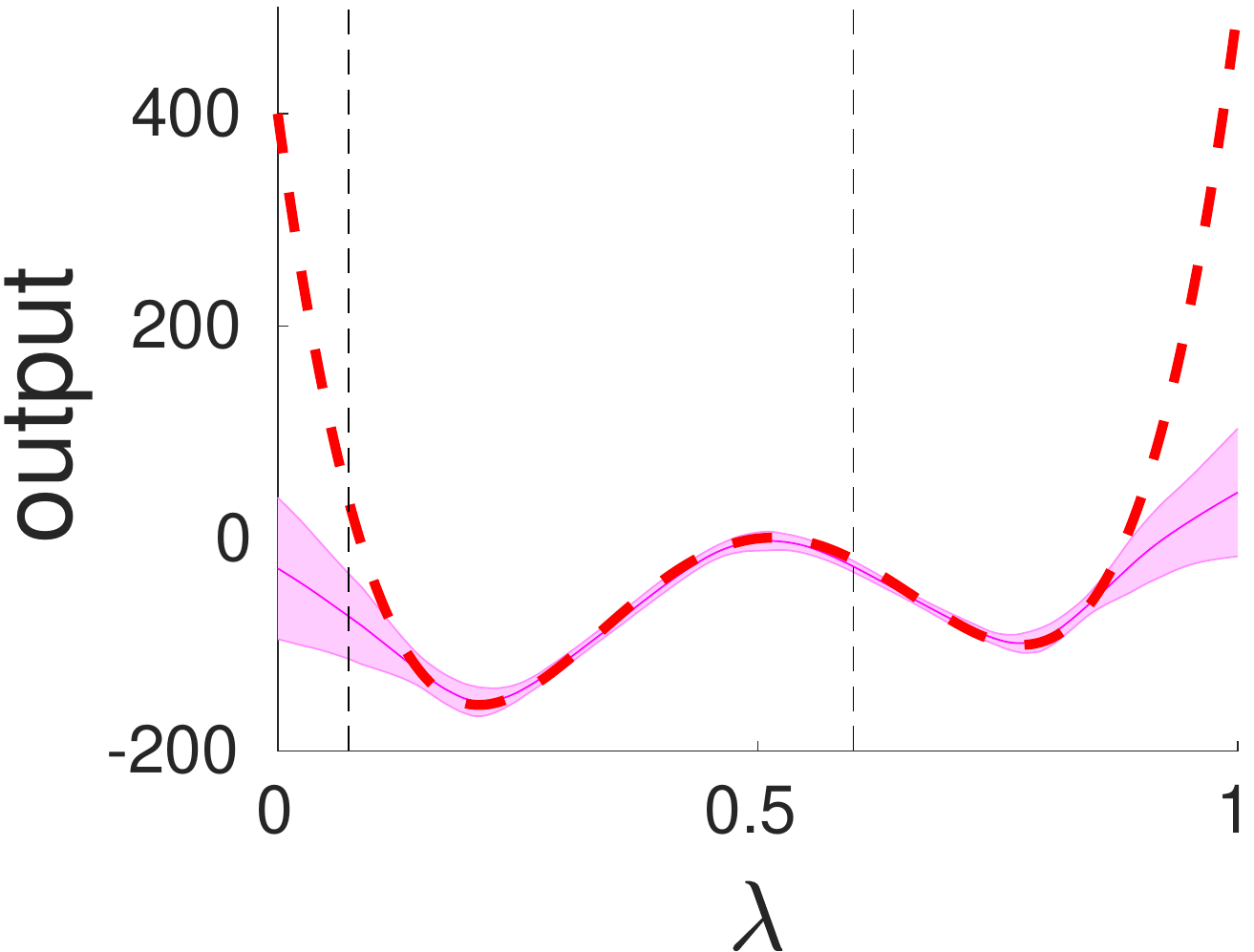}
         \caption{EnsBS}
     \end{subfigure}
        \caption{
        Experiment \ref{E2} in dimension $d=4$. Predictions (solid lines) and according expanded uncertainties (shaded areas) of the anchor model BLR (subplot (a) - uncertainties are very small, barely visible) and different methods in deep learning (subplots (b)-(d)) along a diagonal of the $d$-dimensional hypercube (cf. section \ref{sec:experiments}), together with the ground truth (red dashed line). The abbreviations for the methods are as in the beginning of section \ref{sec:results}.
        }
        \label{fig:exampleE2_dimhigh}
\end{figure}

\begin{figure}[t!]
     \centering
     \begin{subfigure}[b]{0.3\textwidth}
         \centering
         \includegraphics[width=\textwidth]{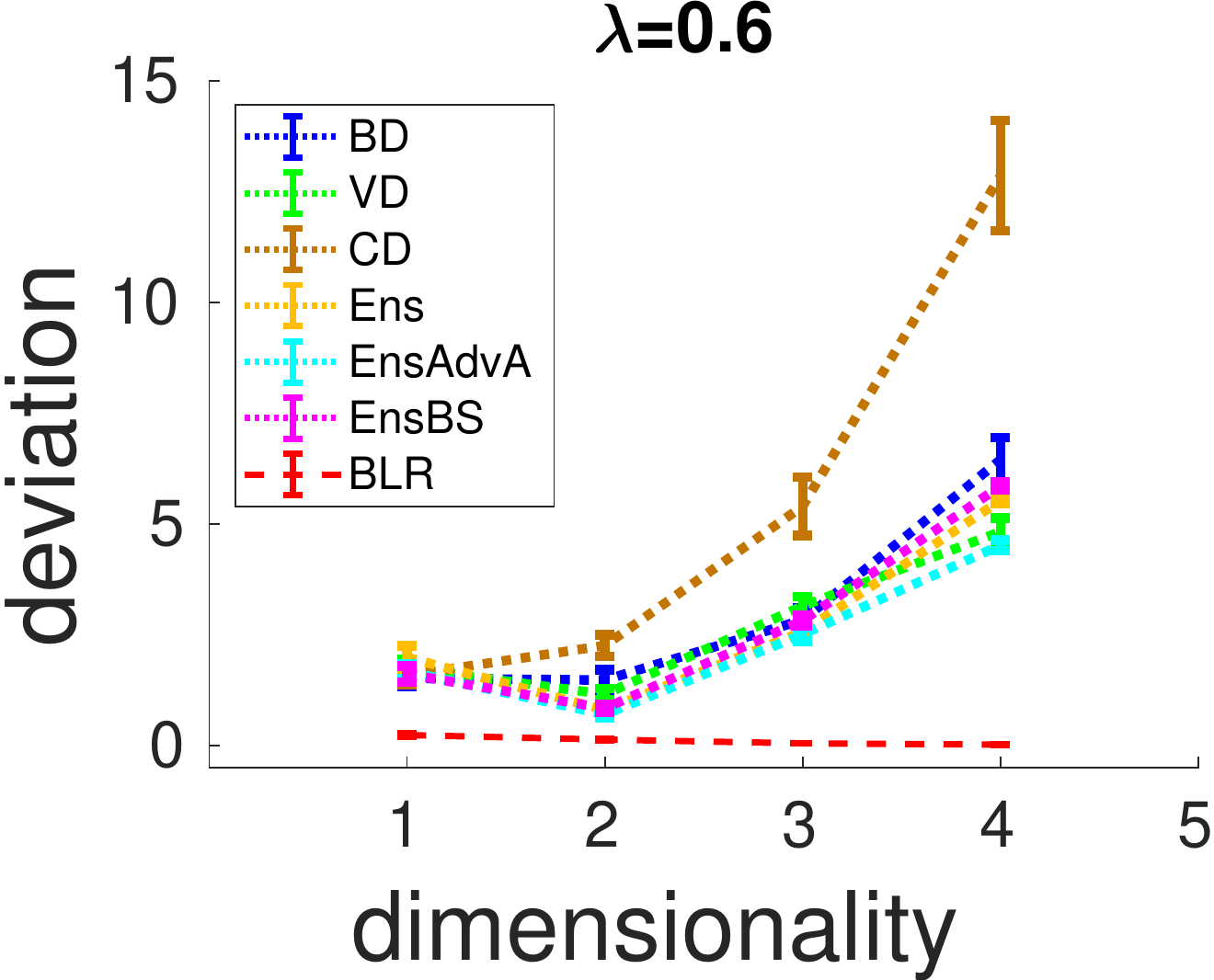}
         \caption{}
     \end{subfigure}
     \hfill
     \begin{subfigure}[b]{0.3\textwidth}
         \centering
         \includegraphics[width=\textwidth]{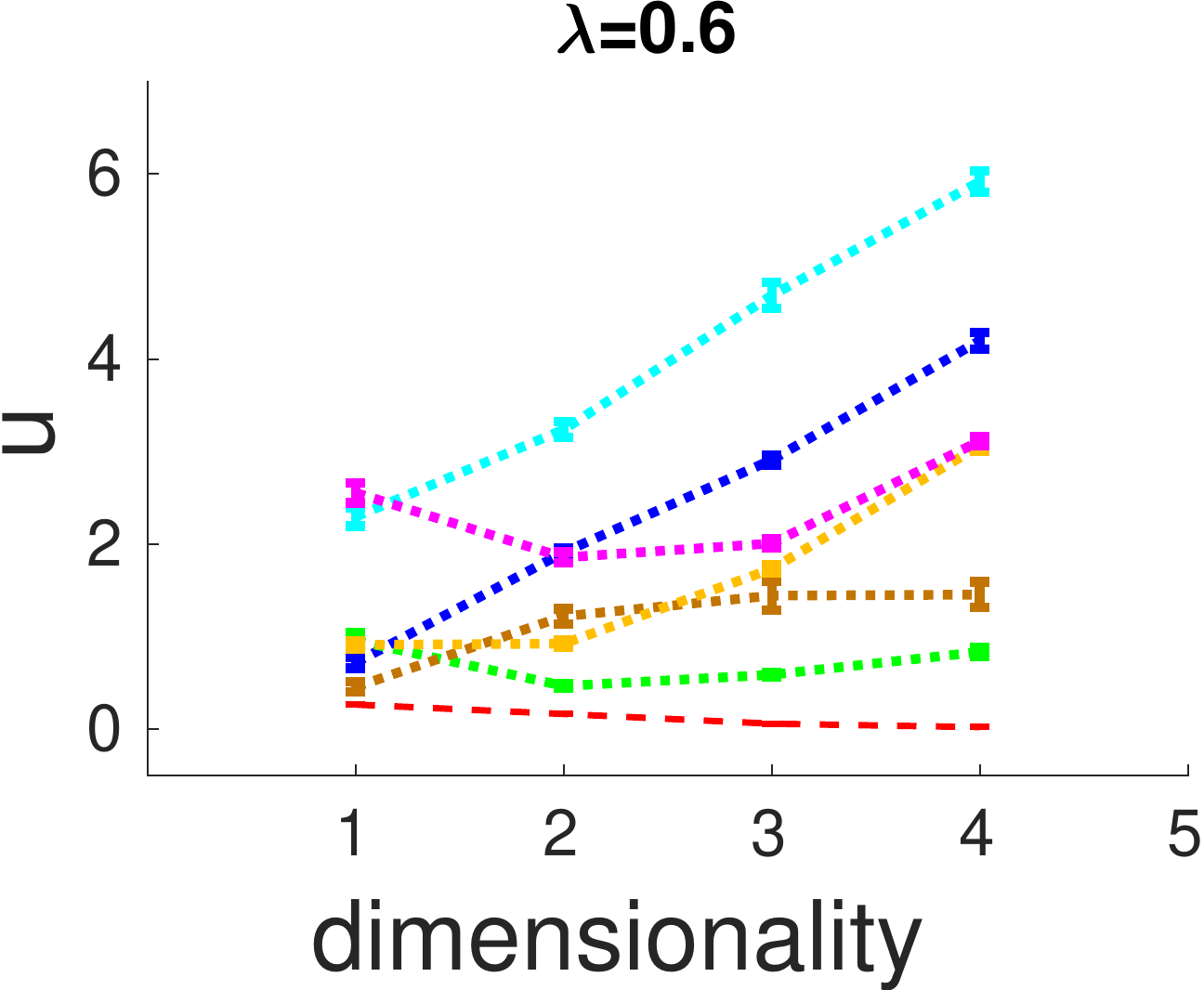}
         \caption{}
     \end{subfigure}
     \hfill
     \begin{subfigure}[b]{0.3\textwidth}
         \centering
         \includegraphics[width=\textwidth]{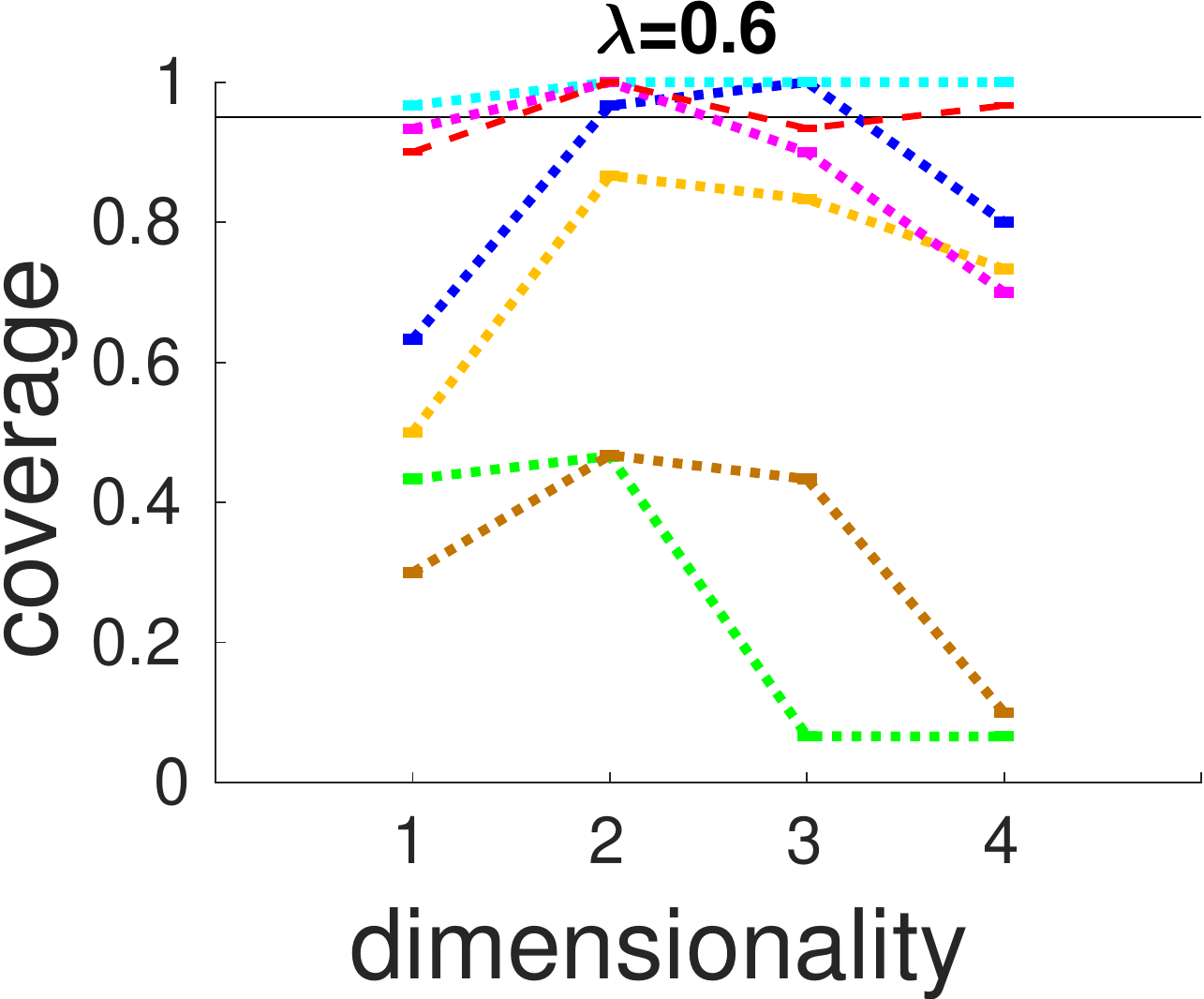}
         \caption{}
     \end{subfigure}\\
      \begin{subfigure}[b]{0.3\textwidth}
         \centering
         \includegraphics[width=\textwidth]{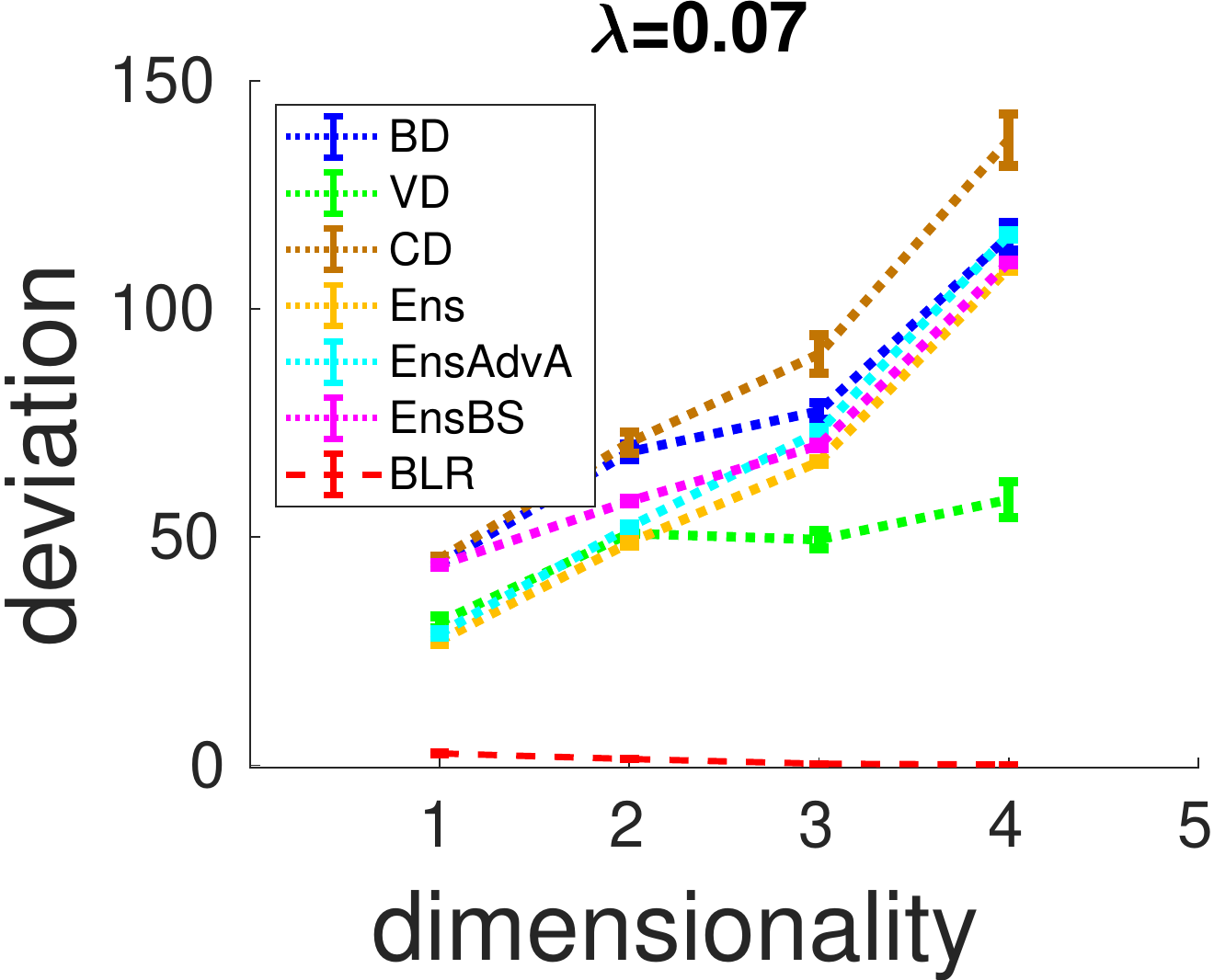}
         \caption{}
     \end{subfigure}
     \hfill
     \begin{subfigure}[b]{0.3\textwidth}
         \centering
         \includegraphics[width=\textwidth]{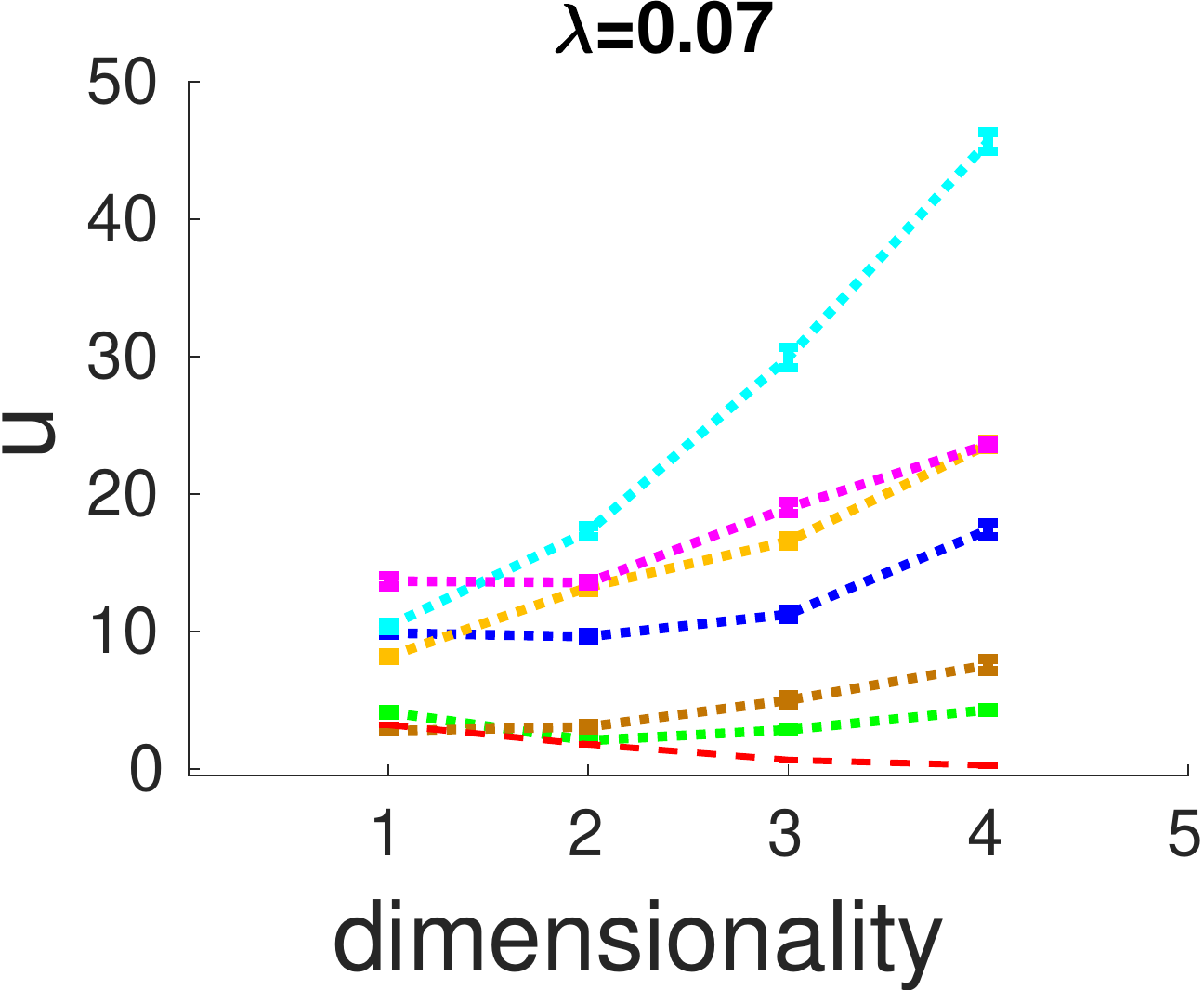}
         \caption{}
     \end{subfigure}
     \hfill
     \begin{subfigure}[b]{0.3\textwidth}
         \centering
         \includegraphics[width=\textwidth]{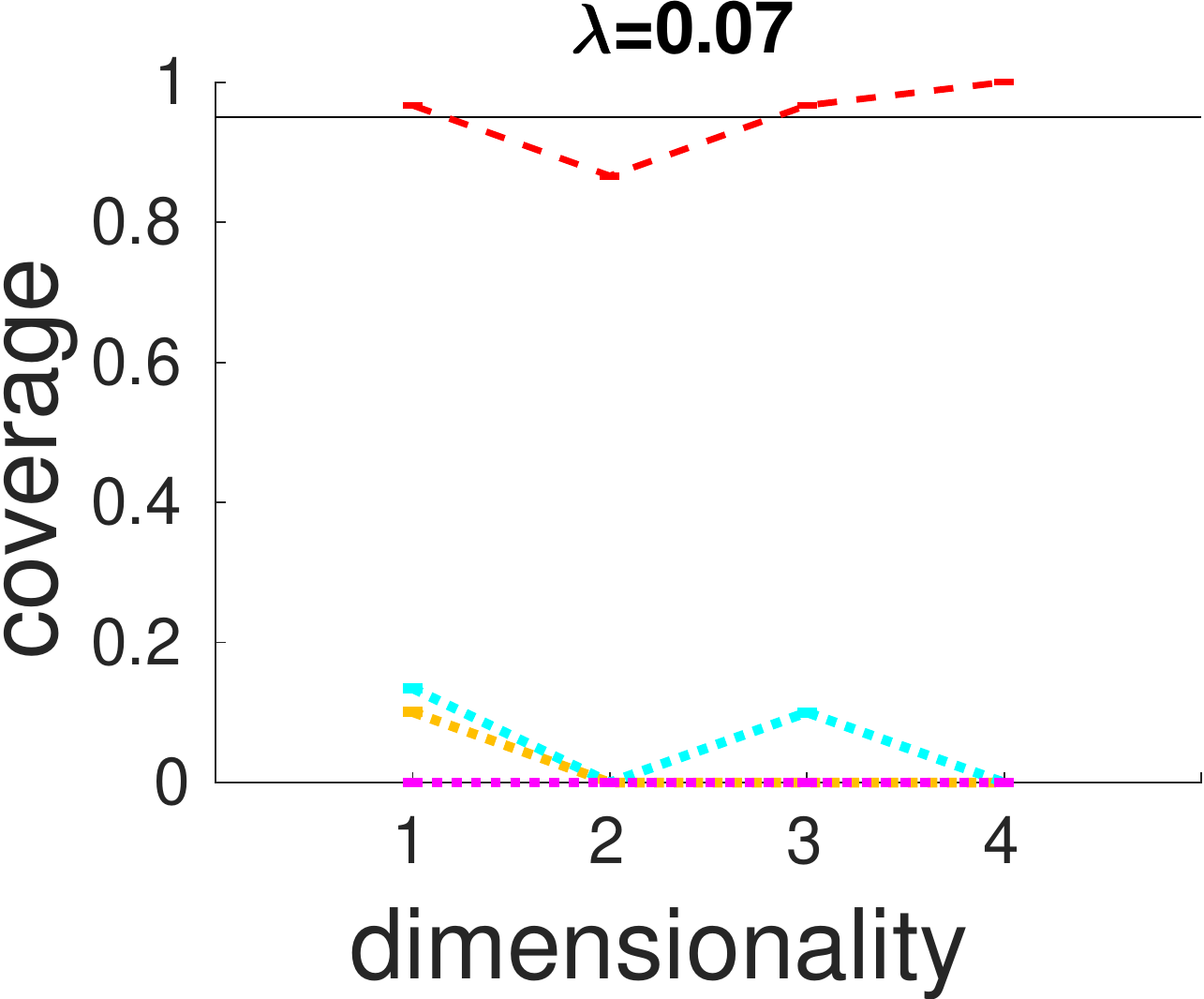}
         \caption{}
     \end{subfigure}
     \caption{Experiment \ref{E2} for various choices of the dimensionality $d$. The three columns show for the anchor model and different methods from deep learning the deviation from the ground truth (left column), the uncertainty (middle column) and the coverage of the ground truth by the uncertainty times 1.96 (right column) for two different inputs $\lambda=0.5996$ (first row, ind-distribution, cf. fig. \ref{fig:exampleE2_dimlow}) and $\lambda=0.074$ (second row, out-of-distribution, cf. fig. \ref{fig:exampleE2_dimhigh})
     }
        \label{fig:_3planes_of_dice_dim_iod_ood}
\end{figure}

Fig. \ref{fig:exampleE2_dimlow} and \ref{fig:exampleE2_dimhigh} show the according results for the experiment \ref{E2}. The elements of the test set were chosen to contain only inputs that are located at the diagonal of the $d$-dimensional hypercube from $(-5,\ldots,-5)^T\in\mathbb{R}^{d}$ to $(5,\ldots,5)^T\in\mathbb{R}^{d}$, i.e. $x_{\lambda}= (1-\lambda)(-5,-5,\ldots,-5)^T+\lambda(5,5,\ldots,5)^T$ with $0\leq \lambda \leq 1$. In that way, the diagonal contains both, out-of-distribution and in-distribution inputs.
Fig. \ref{fig:_3planes_of_dice_dim_iod_ood} depicts the deviation from the ground truth, uncertainty and coverage for two different choices of $\lambda$, namely $\lambda=0.6$ for in-distribution and $\lambda=0.07$ for out-of-distribution. For both inputs $x_\lambda $ we observe an overestimation of the uncertainty, compared with the anchor model, for the ensemble methods with adversarial attacks (EnsAdvA) and bootstrapping (EnsBS).
Both ensemble methods cover for larger dimensions the increasing deviation from the ground truth with an increasing uncertainty, but the increase of uncertainty in dependence in the increasing dimensionality seems a bit too large, especially for EnsAdvA. For CD and VD we observe a decreasing coverage over increasing dimensionality. At the same time the deviation from the ground truth is strongly increasing for CD. The increasing uncertainty over increasing dimensionality is too low, so that the uncertainty does not compensate for the increasing error for this method. For BD we observe an increasing uncertainty with increasing dimensionality and a moderately increasing deviation, which leads to a coverage of round about $0.8$ over increasing dimensionality. For one representative input of the out-of-distribution range  (see subplots fig. \ref{fig:exampleE2_dimlow} and \ref{fig:exampleE2_dimhigh}) we observe for all DL-methods a poor coverage, i.e. we observe a poor behavior in the out-of-distribution range and at the same time an underestimated uncertainty.\\

For experiment \ref{E3}, fig. \ref{fig:E3_heatmaps} shows the deviation from the ground truth, the uncertainty and the coverage as a heat map for every input of the test set (regular grid in $x_{1}$ and $x_{2}$). The in-distribution region is located at $1 \leq x_{1} \leq 4$ and $1 \leq x_{2} \leq 4$. Every row represents a method, the first row contains results provided by BLR, row 2 and 3 contain results from BD and EnsBS. The deviation and uncertainty is shown as a decadic logarithm. The BLR method shows on every position of the input grid a coverage around $0.95$. BD shows in some regions of the input space a poor coverage, i.e. the associated uncertainty of the BD does not cover the deviation in this regions. The ensemble method using bootstrapping (EnsBS) only exhibits in the out-of-distribution range a coverage well below 0.95.

\begin{figure}[h!]
\centering%
\settoheight{\tempdima}{\includegraphics[width=.32\linewidth]{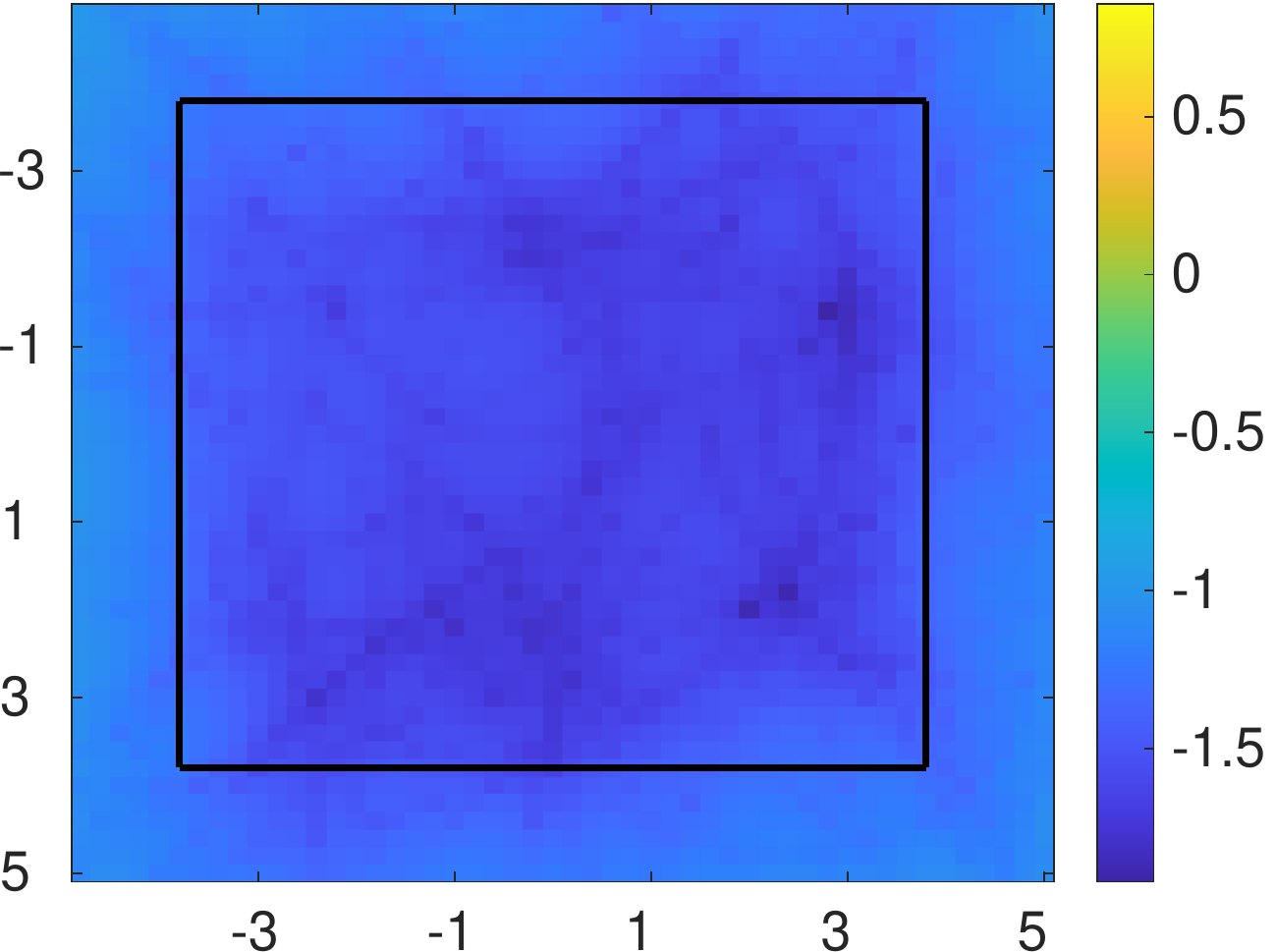}}%
\centering\begin{tabular}{@{}c@{\hskip 0.4cm}c@{\hskip 0.4cm}c@{\hskip 0.4cm}c@{}}
&\textbf{deviation} & \textbf{uncertainty} & \textbf{coverage}\\ 
\rowname{BLR}&
\includegraphics[width=.3\linewidth]{2D_rms_u_cv_heatmaps_rms_5_7_.pdf}&
\includegraphics[width=.3\linewidth]{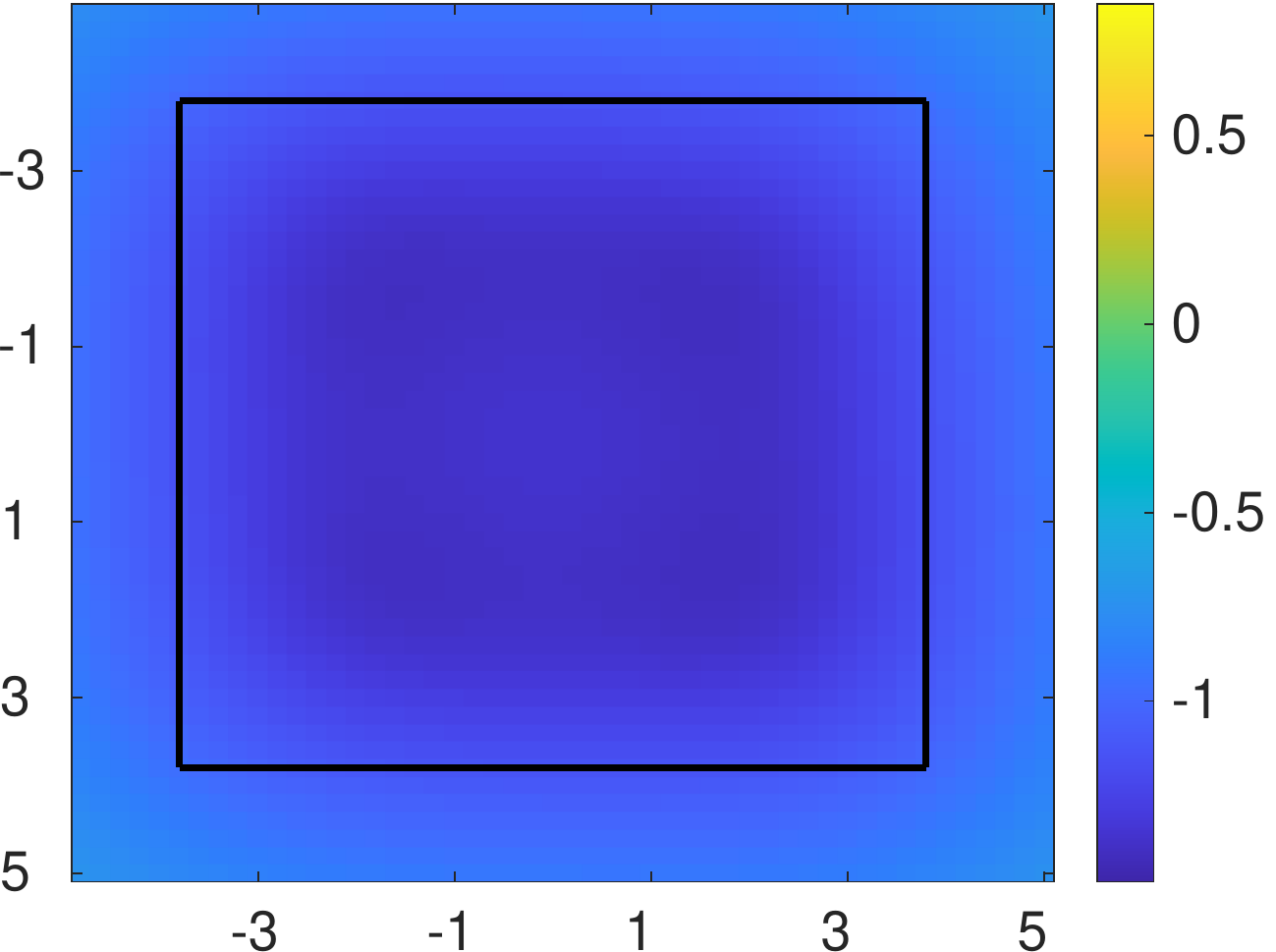}&
\includegraphics[width=.3\linewidth]{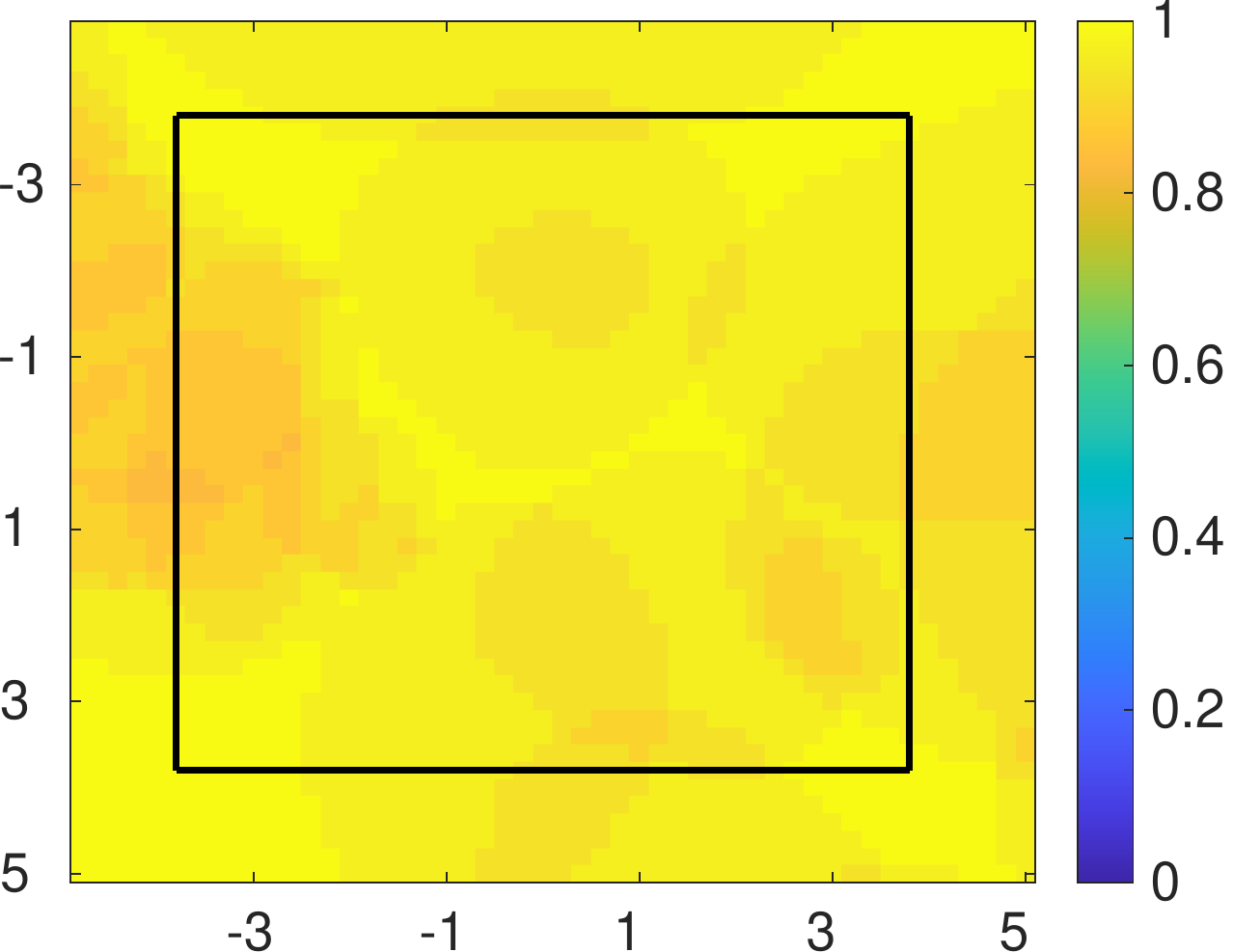}\\[1ex]
\rowname{BD}&
\includegraphics[width=.3\linewidth]{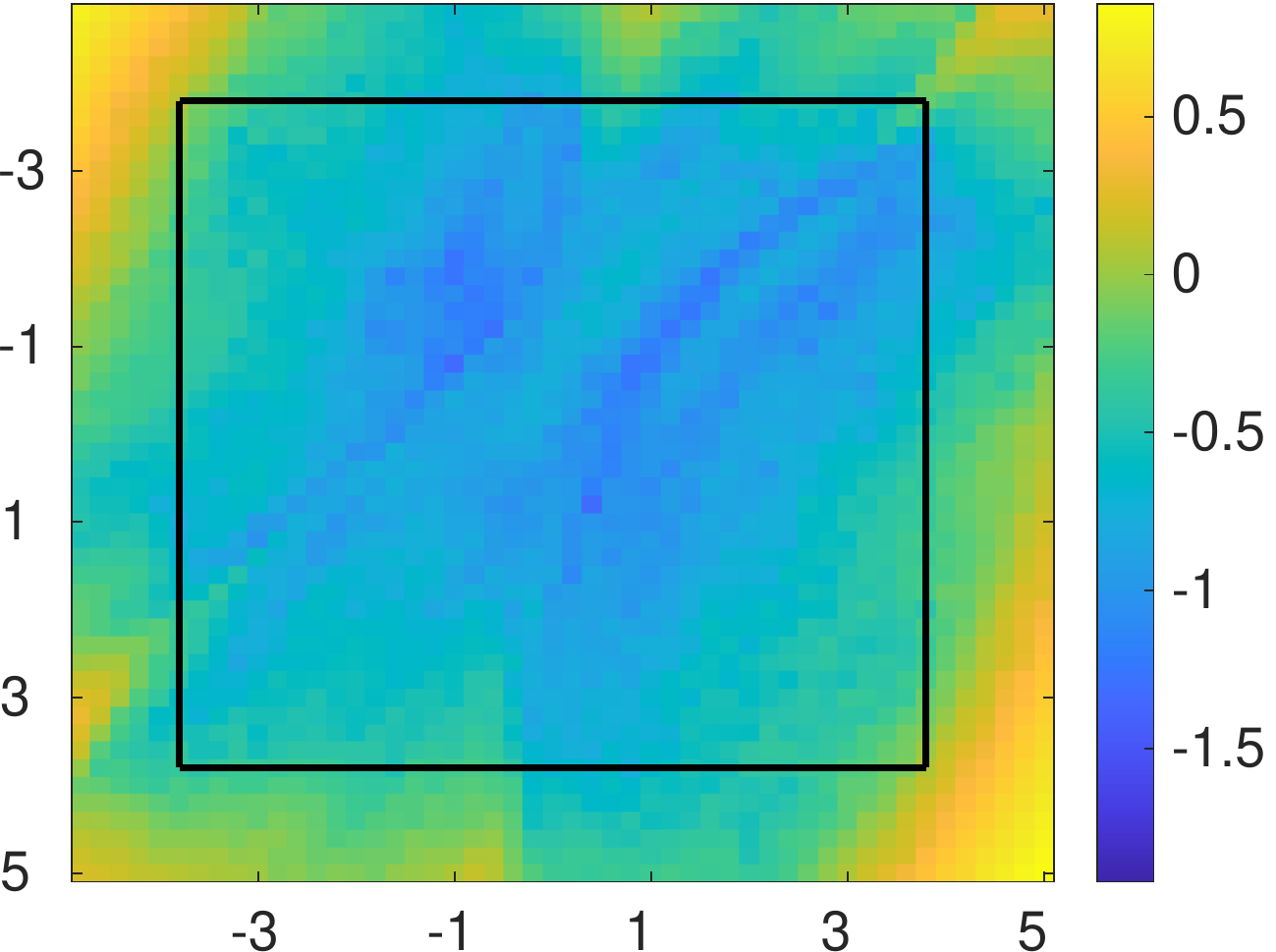}&
\includegraphics[width=.3\linewidth]{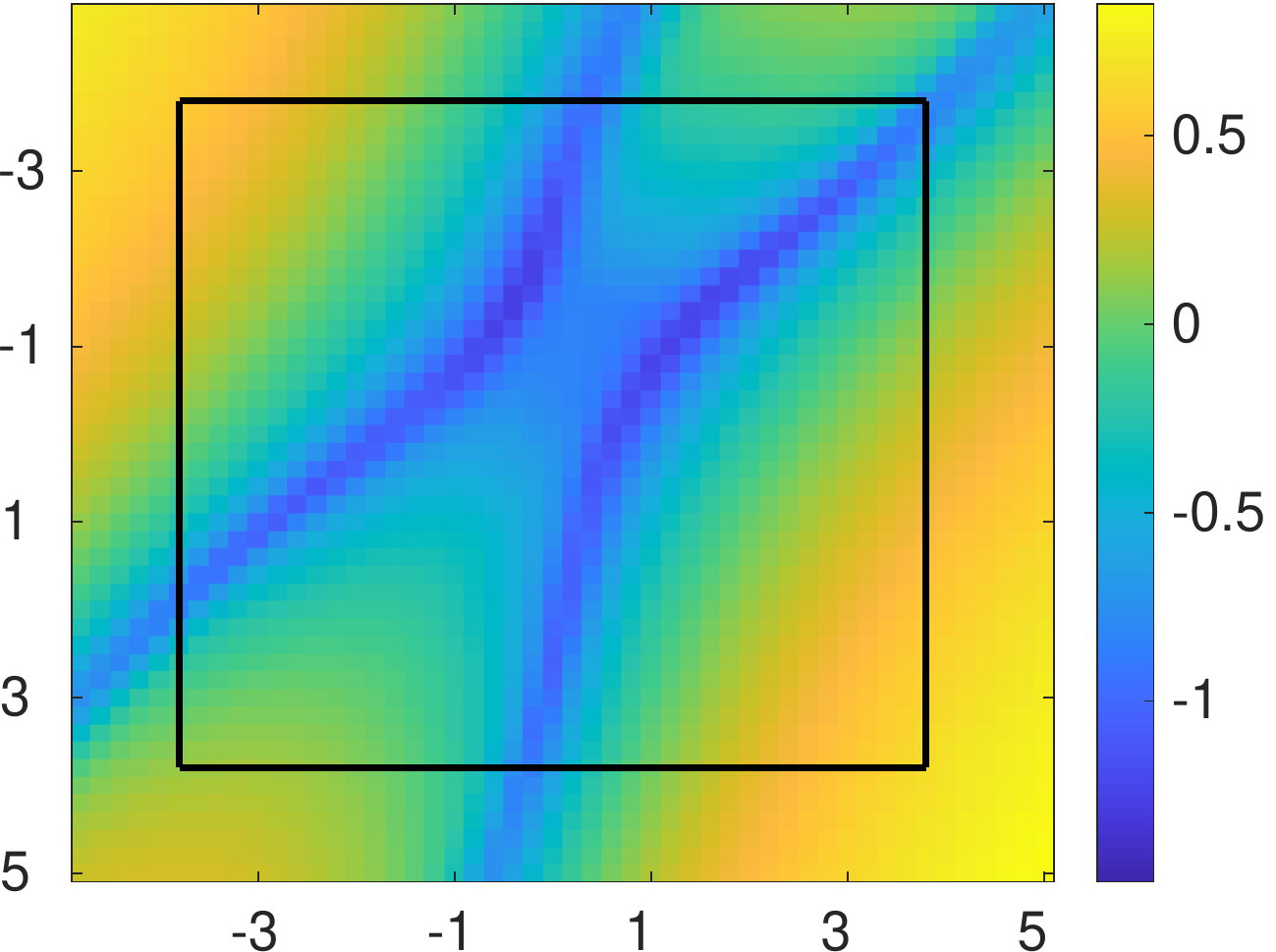}&
\includegraphics[width=.3\linewidth]{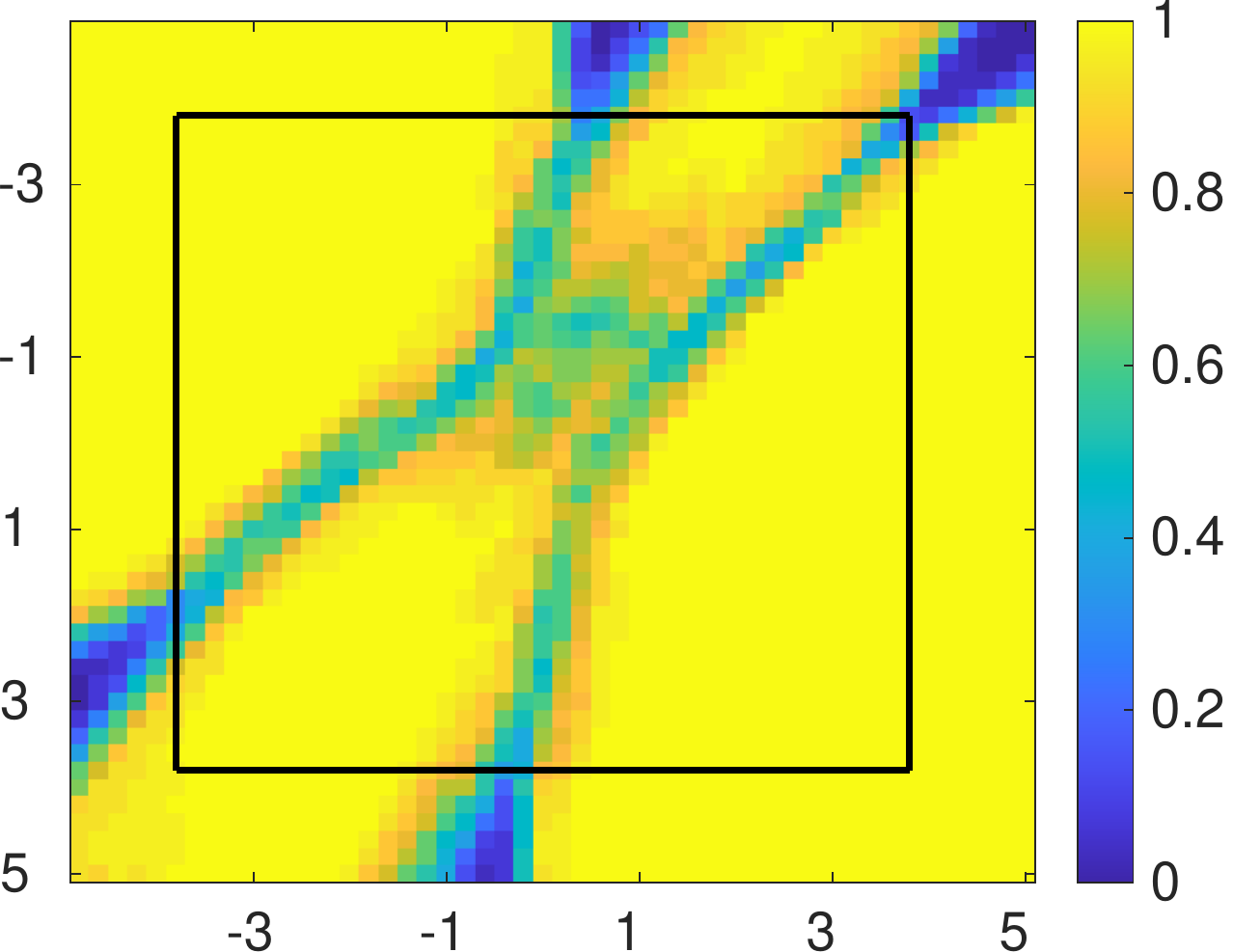}\\[1ex]
\rowname{VD}&
\includegraphics[width=.3\linewidth]{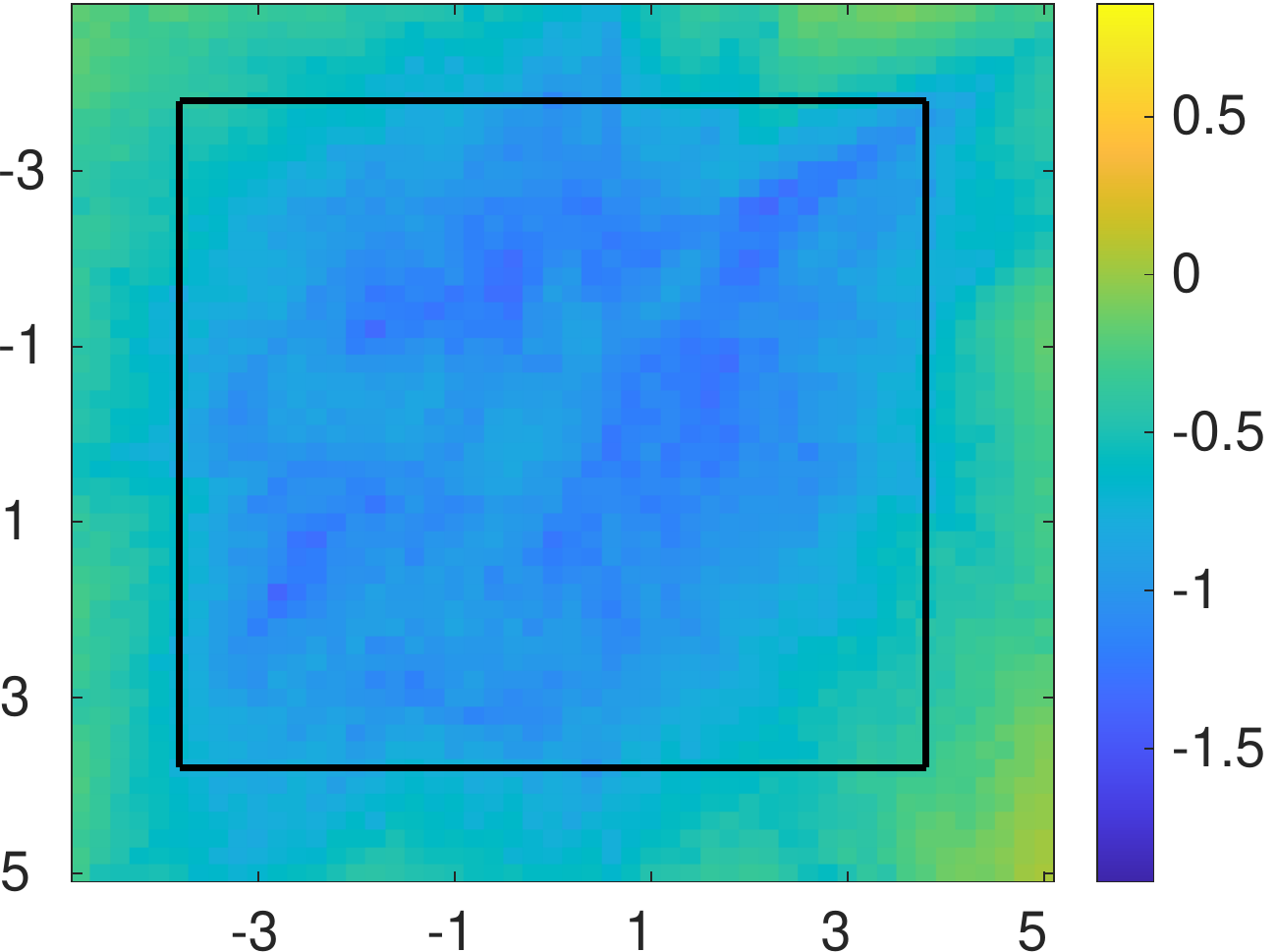}&
\includegraphics[width=.3\linewidth]{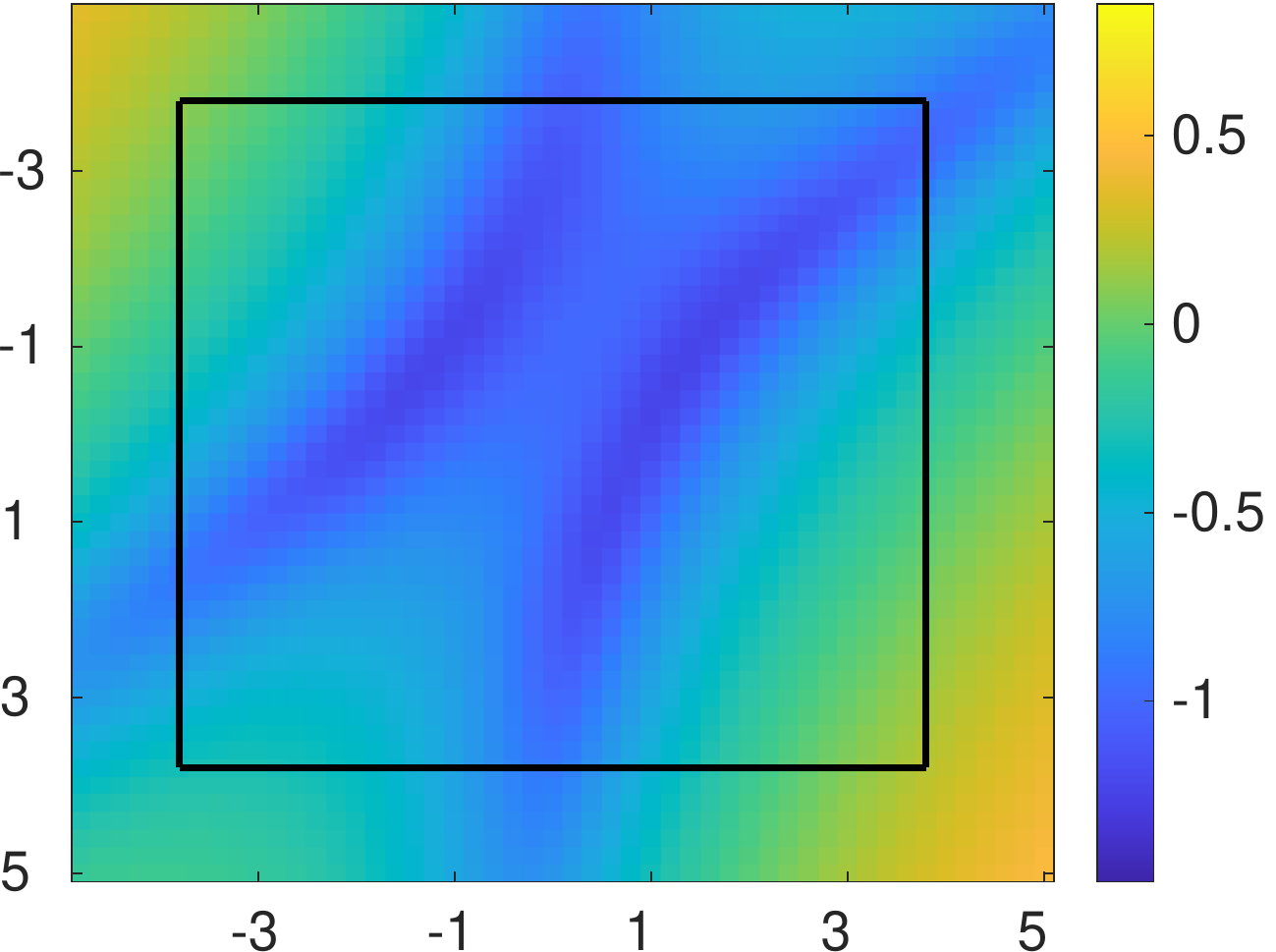}&
\includegraphics[width=.3\linewidth]{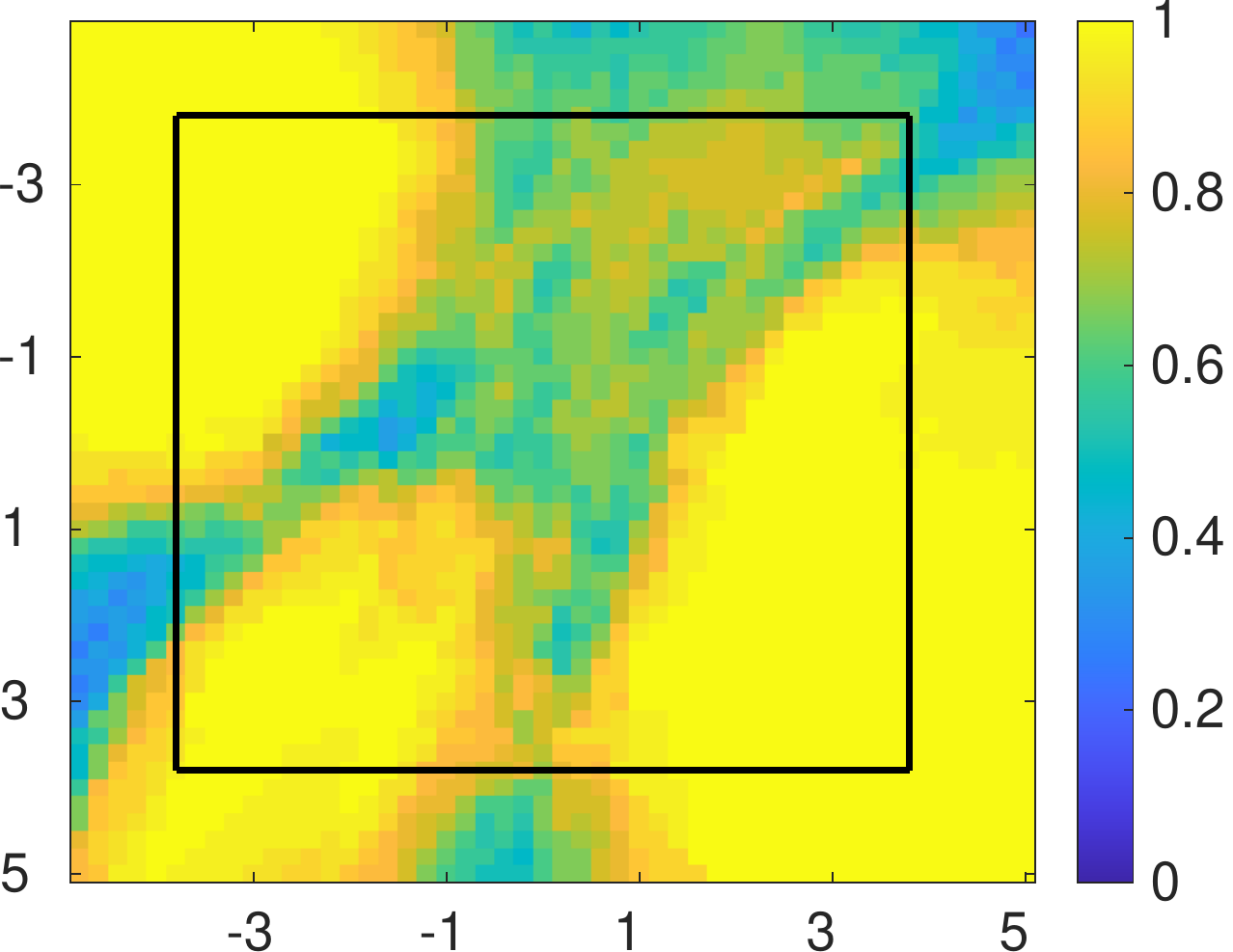}\\[1ex]
\rowname{EnsBS}&
\includegraphics[width=.3\linewidth]{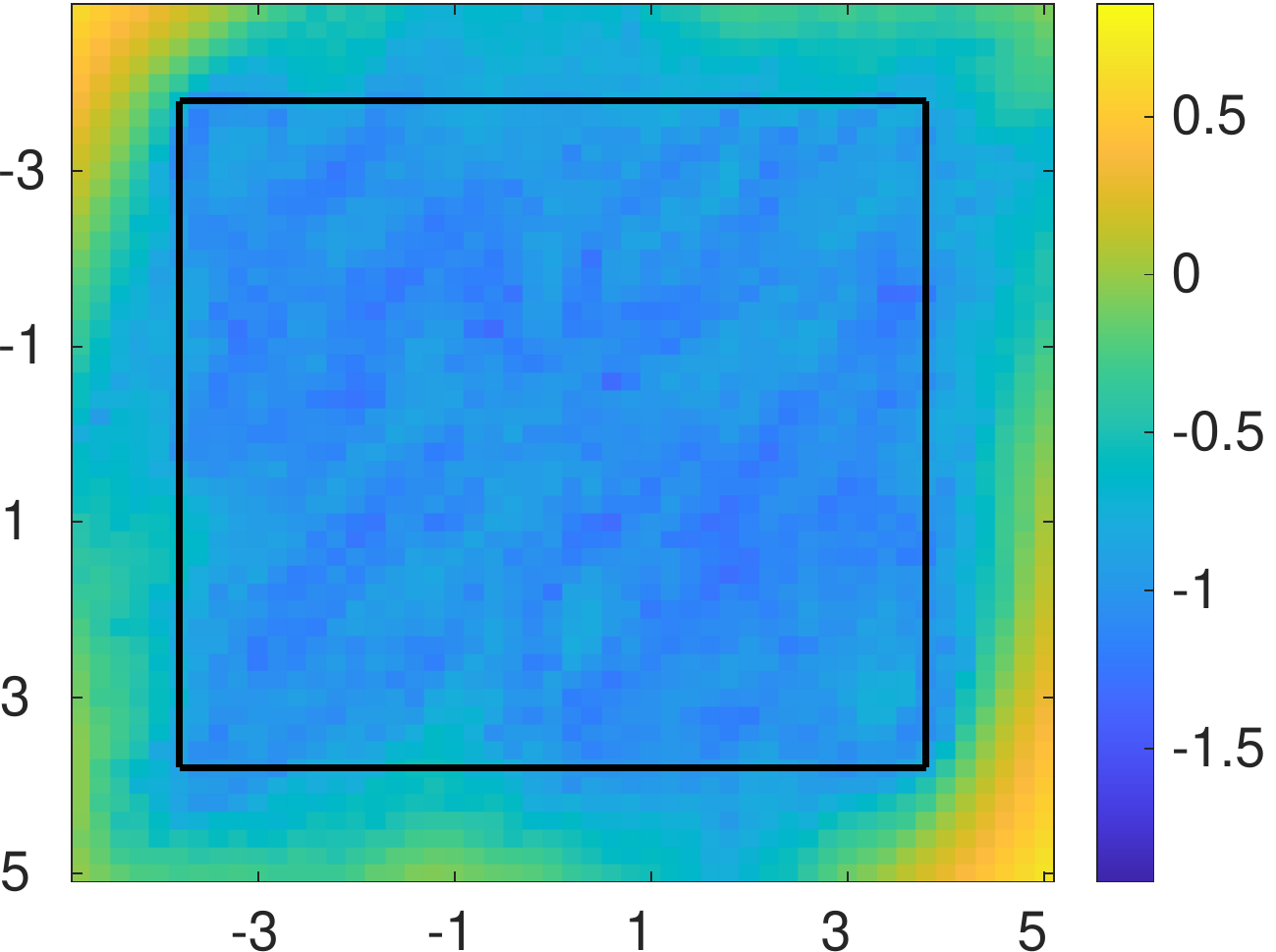}&
\includegraphics[width=.3\linewidth]{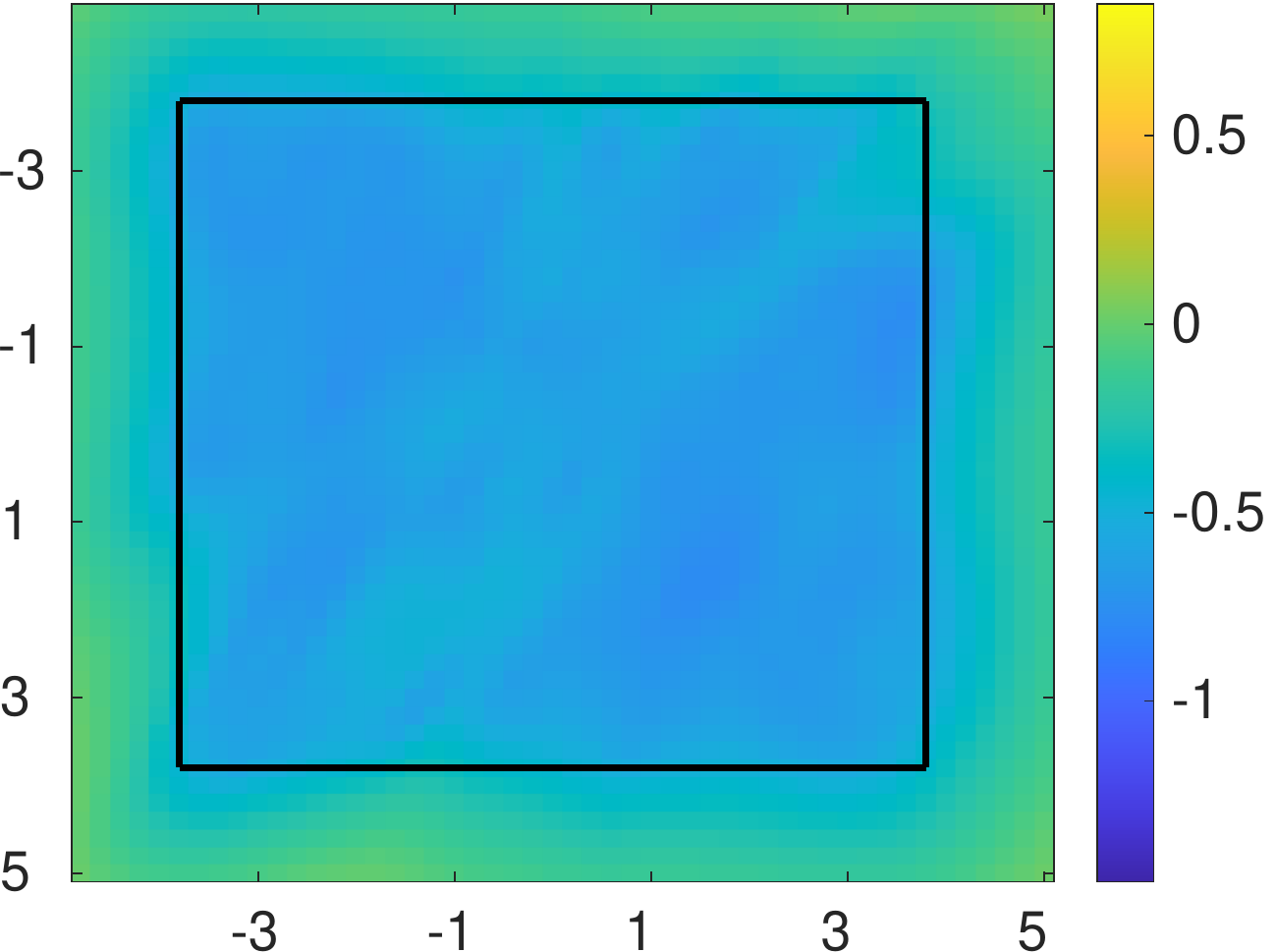}&
\includegraphics[width=.3\linewidth]{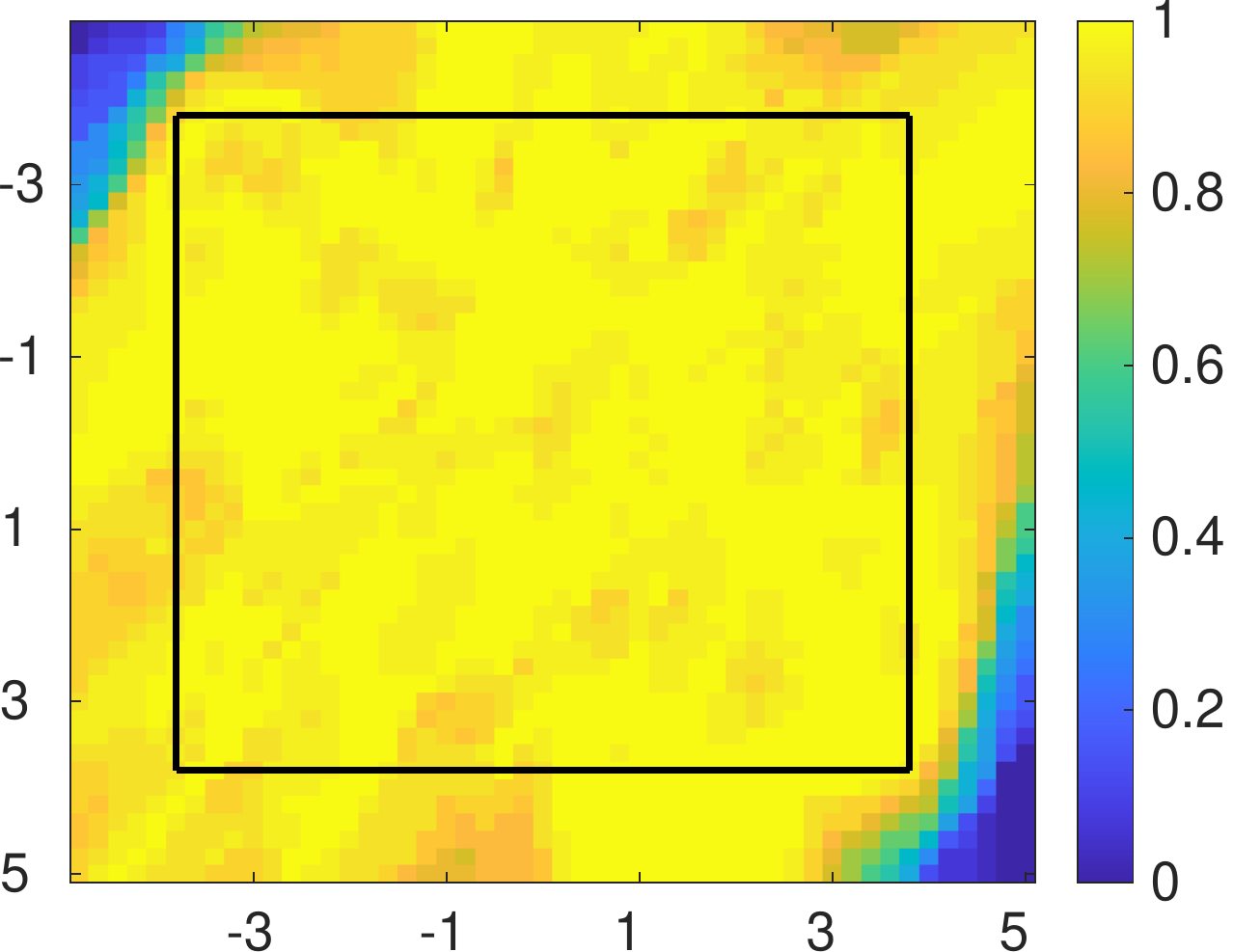}\\[1ex]
\end{tabular}
\caption{Results for the experiment \ref{E3} of the anchor model and different methods from deep learning for regular gridded test data. Every plot of the 9 visible plots arranged in a tabular, contains one heat map and shows the objectives introduced at the beginning of section \ref{sec:experiments} averaged over 30 repeated samplings: the deviation from the ground truth (first column, in $\log_{10}$ scale),  uncertainty (second column, in $\log_{10}$ scale) and  the coverage (third column). The different rows correspond to different methods with abbreviations are as in the beginning of section \ref{sec:results}.}
\label{fig:E3_heatmaps}
\end{figure}

\section{Conclusion and discussion} \label{sec:ConclusionOutlook}

In this article we described a method to create benchmarking test problems for uncertainty quantification in deep learning. We demonstrated how the linearity of a generating model in its parameters can be exploited to create an anchor model that helps to judge the quality of the prediction and the quantified uncertainty of deep learning approaches. Within this set-up it is possible to design generic data sets with explicit control over problem complexity and input dimensionality. For various test problems we studied the performance of several approaches for uncertainty quantification in deep regression. Common statistical metrics were used to compare the performance of these methods with each other and with the anchor model. In particular we assessed the behavior of these methods under increasing problem complexity and input dimensionality.
The flexibility to design test problems with specified complexity, along with the availability of a reference solution, qualifies the proposed framework to form the basis for designing set of benchmark problems for uncertainty quantification in deep regression.

\vskip 0.2in

\bibliography{bibfile_uq4dl}

\end{document}